\DeclareMathAlphabet\mathbfcal{OMS}{cmsy}{b}{n}
\renewcommand*{\backrefalt}[4]{%
    \ifcase #1 \footnotesize{(not cited)}%
    \or        \footnotesize{(cited on 	page~#2)}%
    \else      \footnotesize{(cited on pages~#2)}%
    \fi}
\newcommand{\BEAS}{\begin{eqnarray*}}
\newcommand{\EEAS}{\end{eqnarray*}}
\newcommand{\BEA}{\begin{eqnarray}}
\newcommand{\EEA}{\end{eqnarray}}
\newcommand{\BEQ}{\begin{equation}}
\newcommand{\EEQ}{\end{equation}}
\newcommand{\BIT}{\begin{itemize}}
\newcommand{\EIT}{\end{itemize}}
\newcommand{\BNUM}{\begin{enumerate}}
\newcommand{\ENUM}{\end{enumerate}}
\newcommand{\BA}{\begin{array}}
\newcommand{\EA}{\end{array}}
\newcommand{\tr}{\mathop{ \rm tr}}
\newcommand{\rb}{\mathbb{R}}
\newcommand{\BlackBox}{\rule{1.5ex}{1.5ex}}  
\newcommand{\lova}{Lov\'asz }
\newenvironment{proof}{\par\noindent{\bf Proof\ }}{\hfill\BlackBox\\[2mm]}
\newtheorem{lemma}{Lemma}
\newtheorem{proposition}{Proposition}
\newcommand{\mysec}[1]{Section~\ref{sec:#1}}
\newcommand{\eq}[1]{Eq.~(\ref{eq:#1})}
\newcommand{\myfig}[1]{Figure~\ref{fig:#1}}
\def \E{{\mathbb E}}
\def \P{{\mathbb P}}
\def \X{{\mathcal X}}
\def \Y{{\mathcal Y}}
\def \P{{\mathbb P}}
\title{A Convex Loss Function for Set Prediction \\ with Optimal Trade-offs Between Size and Conditional Coverage}
\author{Francis Bach \\
Inria - Ecole Normale Sup\'erieure\\
PSL Research University}
\date{\today}
\begin{document}
\maketitle

\begin{abstract}
We consider supervised learning problems in which set predictions provide explicit uncertainty estimates. Using Choquet integrals (a.k.a.~\lova extensions), we propose a convex loss function for non-decreasing subset-valued functions obtained as level sets of a real-valued function. This loss function allows optimal trade-offs between conditional probabilistic coverage and the ``size'' of the set, measured by a non-decreasing submodular function.  We also propose several extensions that mimic loss functions and criteria for binary classification with asymmetric losses, and show how to naturally obtain sets with optimized conditional coverage. We derive efficient optimization algorithms, either based on stochastic gradient descent or reweighted least-squares formulations, and illustrate our findings with a series of experiments on synthetic datasets for classification and regression tasks, showing improvements over approaches that aim for marginal coverage.
  \end{abstract}

\section{Introduction}

Uncertainty quantification is crucial in high-dimensional prediction problems. For supervised learning problems, among several frameworks, given a specific, previously unseen input, we consider outputting a subset of the output space with high probability of containing the random, unknown output, a problem we refer to as \emph{set prediction}. This is a classical problem in statistics and machine learning, which has been approached from several perspectives, ranging from parametric or non-parametric confidence sets~\cite{casella2024statistical,robert2007bayesian} to conformal prediction~\cite{vovk2005algorithmic,shafer2008tutorial,angelopoulos2021gentle}.

\paragraph{Trade-off between conditional coverage and size.}
Given a joint distribution over $\X \times \Y$, our goal is to consider set predictions $A: \X \to \mathcal{P}(\Y)$ (the set of all measurable subsets of $\Y$, where $\Y$ is a measurable set) that achieve, for all observed $x \in \X$, the optimal trade-off between a well-defined notion of ``size'' $V(A(x))$ and \emph{conditional miscoverage} $\P(Y \notin A(X)|X=x)$. We aim to achieve this through a convex loss formulation, where the empirical risk (average of loss function values on observations) is used for training, and expected risk is used for testing, with the usual assumptions in supervised learning that training and testing distributions are the same.

In this paper, we consider $V: \mathcal{P}(\Y) \to \rb$ such a measurement function of size, which we assume to be non-decreasing (i.e., $V(A) \leqslant V(B)$ if $A \subseteq B$), such that $V(\varnothing)=0$ (which implies that $V$ is non-negative), and $V(\Y)$ is finite.

Among the several ways to obtain a trade-off, the minimum expected size given conditional coverage, that is,
$$\inf_{A: \X \to \mathcal{P}(\Y)} \E[ V(A(X)) ] \ \mbox{ such that } \ \P( Y \notin A(X) | X ) \leqslant \alpha \ \mbox{ almost surely},$$
is natural in many applications. Like common supervised learning criteria, it
decomposes across every $x \in \X$, that is,  the optimal set $A(x)$ for a specific $x \in \X$ should be a minimizer of 
\BEQ
\label{eq:optA}
\inf_{B \subseteq \Y} V(B) \ \mbox{ such that }\ \P( Y \notin B | X=x) \leqslant \alpha.
\EEQ
However, this natural formulation poses several problems: (a) Because of the constraint, unbiased evaluation or estimation from data requires the knowledge of the conditional distribution, and (b) even if the conditional distribution of $Y$ given $X$ is known, this problem may be computationally hard to solve when $\Y$ is large (the computational intractability occurs even in the simplest case where $V$ is a probability measure, as this leads to the $\{0,1\}$-knapsack problem, which is known to be intractable~\cite{cormen2022introduction}).

\paragraph{Lagrangian formulation.}
 In order to tackle the computational problem (b), we consider the Lagrangian relaxation of \eq{optA}, that is,
 \BEQ
\label{eq:optAlag}
\sup_{\lambda \geqslant 0}\  \inf_{B \subseteq \Y} \ V(B) + \lambda \big[  \P( Y \notin B | X = x) -  \alpha \big],
\EEQ
which is in general not tight, that is, not equal to the optimal value in \eq{optA} (and for which the optimal $\lambda$ is typically different for every $x$). For simple functions $V$ such as positive additive measures, the problem in \eq{optAlag} can be easily solved by sorting normalized densities~\cite{cormen2022introduction}, and is known as the \emph{fractional relaxation} and can be interpreted through randomized predictions (see \mysec{convexloss}).

In order to obtain a generic tractable formulation for \eq{optAlag} beyond positive additive measures, we restrict~$V$ to a known class of set functions. In this paper, we consider \emph{submodular} functions, which satisfy
$$
\forall A,B \subseteq \Y, \ V(A \cap B) + V(A \cup B) \leqslant V(A) + V(B),
$$
for which the computational problem in \eq{optAlag} is tractable.
See~\cite{fujishige2005submodular,bach2013learning} for an introduction to submodular functions, and examples such as additive measures (for which there is equality above and which are referred to as modular functions) or set-covers (that can discourage sets with too many connected components, see \mysec{regression}), and a review of main properties in \mysec{lova}.

A key aspect is that the Lagrange multiplier $\lambda$ has to depend on $x$; otherwise, we only get marginal coverage. Indeed, with the same Lagrange multiplier $\lambda$ for all $x \in \X$, we obtain the problem
\BEAS
& & \sup_{\lambda \geqslant 0} \inf_{A: \X \to \mathcal{P}(\Y)}
\E\Big[
V(A(X)) + \lambda \big[ \P(Y \!\notin\! A(X) | X ) - \alpha  \big]
\Big] 
\\
& = & 
\sup_{\lambda \geqslant 0} \inf_{A: \X \to \mathcal{P}(\Y)}
\E\big[
V(A(X))\big] +\lambda \big[ \P(Y \!\notin\! A(X)) -  \alpha  \big],
\EEAS
which exactly leads to the Lagrangian relaxation of the optimal \emph{marginal} coverage problem
$$
\inf_{ A: \X \to \mathcal{P}(\Y)} \E[ V(A(X)) ] \ \mbox{ such that } \ \P(Y \notin A(X))  \leqslant \alpha,
$$
as studied in~\cite{braun2025minimum} for certain shapes of sets (e.g., $\ell_p$-norm balls) and by~\cite{sadinle2019least} in the discrete case. Our goal is to go beyond and explicitly tackle conditional coverage guarantees, which require that the Lagrange multiplier $\lambda$ depends on $x \in \X$. Moreover, beyond the coverage issue, we still need to efficiently parameterize $A: \X \to \mathcal{P}(\Y)$.

This paper is based on three main ideas:
\BIT
\item[(1)] Learn all set functions for all values of the trade-off parameter $\lambda$ through the level sets of a real-valued function $g: \X \times \Y \to \rb$ as 
$$A(\lambda,x) = \big\{ y \in \Y, \ g(x,y) \geqslant -\lambda\big \}.$$
Parameterizing sets through level sets has a long history in signal and image processing~\cite{osher1988fronts,osher2003geometric}, and here leads to a non-decreasing (in $\lambda$) function $A(\lambda,x)$ based on a real-valued function $g$.

\item[(2)] Realize that the integral with respect to $\lambda$ of the natural loss $V(A(X)) + \lambda \P(Y \notin A(X))$ leads to a convex function in $g$ which depends on the Choquet integral (a.k.a.~\lova extension) of the function $V: \Y \to \rb$, which is a classical concept in submodular analysis. This naturally leads to a convex learning formulation for finite data, with the added benefit of allowing the computation of all fractional solutions to all conditional miscoverage problems (for any $x \in \X$ and any $\alpha \in [0,1]$). For additive measures, this leads to a novel non-standard quadratic loss function. 

\item[(3)] Leverage the availability of solutions for all $\lambda$ to still obtain good conditional coverage, by naturally defining a function $\lambda^\ast(\alpha,x)$ for the value of $\lambda$ for a given input $x \in \X$ and some arbitrary miscoverage level $\alpha \in (0,1)$.
\EIT

\subsection{Contributions}
We make the following contributions:
\BIT
\item We propose in \mysec{convexloss} a convex loss function for non-decreasing subset-valued functions obtained as level sets of a real-valued function. This loss function allows optimal trade-offs between conditional probabilistic coverage and the ``size'' of the set, measured by a non-decreasing submodular function. We provide examples in \mysec{examples} that cover both discrete and continuous sets $\Y$.
\item We propose in \mysec{otherlosses} several extensions mimicking loss functions and criteria for binary classification with asymmetric losses. These loss functions can be used to evaluate any non-decreasing subset-valued function, whereas the one proposed in \mysec{convexloss} can be used for both learning and evaluation.
\item We show in \mysec{conformal} how to naturally obtain sets with conditional coverage and how to use conformal prediction to at least ensure a posteriori marginal coverage.
\item We derive efficient optimization algorithms in \mysec{guarantees}, either via stochastic gradient descent or via reweighted least-squares formulations.

\item We illustrate our findings with a series of experiments in \mysec{experiments} on synthetic datasets, on classification and regression tasks. We compare our estimators based on our new loss functions to baselines based on the pinball loss~\cite{bookQuantile}, baselines based on implicit or explicit density estimation, or baselines derived from a simpler Lagrangian formulation (input-independent Lagrange multiplier).
\EIT

\subsection{Related work}
In this section, we describe existing convex formulations for set predictions or the related estimation of quantiles of a one-dimensional probability distribution.

\paragraph{Pinball loss.}
Minimizing with respect to $g: \X  \to \rb$, the following risk
$$
\E \big[
\alpha (g(X)-Y)_+ + (1-\alpha) ( Y - g(X))_+
\big]
$$
leads to an optimal prediction such that for all $x$ (see \cite{bookQuantile,romano2019conformalized})
$$
0 = \E\big[ \alpha  1_{g(X)-Y \geqslant 0}  -  (1-\alpha)  1_{g(X)-Y < 0} \big| X = x\big]
= \P( Y \leqslant g(X)| X=x) - (1-\alpha),
$$
which leads to a conditional coverage for the set $\{ Y \leqslant g(X)\}$, if (1) we can optimize over all measurable functions $g$, and (2) we have enough data for this estimation (requirements that apply also to our own framework). This allows us to learn any intervals $  [ g(X), h(X) ]$ by learning two quantiles (this is precisely the ``interval loss''~\cite{winkler1972decision,gneiting2007strictly}). Multivariate extensions exist but are not as straightforward~\cite{carlier2016vector}, but both with univariate and multivariate outputs, there is no notion of optimal size. Extensions that take into account a notion of size, such as the volume, have been developed for special sets, but for marginal coverage~\cite{sadinle2019least,pouplin2024relaxed,braun2025minimum}. Also, it is worth noting the similarity (in spirit) between our integration over all trade-off parameters $\lambda$ and the integration over all $\alpha$ above, which leads to the continuous ranked probability score~\cite{matheson1976scoring}.

Another link with the pinball loss would be the possibility of learning the threshold function $\lambda: \X \to \rb$ through the following loss function
$$
\E  \big[ \alpha ( g(X,Y)+\lambda(X) ) _+ + (1-\alpha) ( -g(X,Y)-\lambda(X) ) _+\big],
$$
that would ensure, if the optimization with respect to $\lambda$ is done ``correctly'' (with enough data and an expressive enough model), that we have the correct conditional coverage for all sets $\{ g(x,\cdot) + \lambda(x) \geqslant 0\}$, but with no notion of size.

\paragraph{Set predictions.}
This has been formulated as a structured prediction problem, with no notion of size and no focus on conditional coverage guarantees~(see \cite{cabannnes2020structured} and references therein).

\paragraph{Density estimation.} One simple way is to run a conditional density estimator, e.g., using square loss or maximum likelihood, and then solve \eq{optA} or its relaxation using the estimate. However, this is relying on a task (learning the whole density) which is harder (in particular for $\Y$ continuous, or $\Y$ discrete with large cardinality) than simply providing a set with high coverage.
Moreover, given the conditional density estimate, the computational problem remains to be solved efficiently.

\paragraph{Conditional coverage for set prediction.} Our work proposes convex cost functions that will lead to conditional coverage in the infinite sample limit, and thus leads to good candidates for procedures like conformal prediction that can provably get marginal coverage~\cite{vovk2005algorithmic,shafer2008tutorial,angelopoulos2021gentle}. We are thus not addressing the known difficulties of provably obtaining conditional coverage~\cite{lei2014distribution,foygel2021limits}.

  \section{Convex loss functions for subset-valued functions}
  \label{sec:convexloss}
 We consider the problem of predicting $y \in \Y$ from $x\in \X$, and a subset-valued function $A: \rb \times \X \to \mathcal{P}(\Y)$ (the set of all measurable subsets of $\Y$), which is non-decreasing in the first argument, that is, for all $x \in \X, \lambda \in \rb ,\lambda' \in \rb$, $$ \lambda < \lambda' \ \Rightarrow \ A(\lambda,x) \subseteq A(\lambda',x).$$
 We define a loss function $\ell_\lambda: \Y \times \mathcal{P}(\Y) \to \rb$,
 $$\ell_
 \lambda(y,B)=V(B) -V(\Y)+ \lambda 1_{y\notin B},
 $$
 which explicitly trades off size and miscoverage via a parameter $\lambda>0$ (we add the term $-V(\Y)$ to ensure the loss is zero for $B=\Y$, which will be needed later).
Our goal is to estimate $A(\lambda,\cdot)$, so that, for a fixed~$\lambda \geqslant 0$, for expectations with respect to the test distribution,
 $$\E\big[\ell_
 \lambda(Y,A(\lambda,X))\big]	 $$
 is minimized. 
 Using standard arguments from learning theory~\cite{devroye2013probabilistic,bach2024learning}, all solutions of the non-constrained problem (i.e., that are not constrained to be monotonic) satisfy
 \BEQ
 \label{eq:All} \forall \lambda \geqslant 0, \ \forall x \in \X , \ 
 A(\lambda,x) \in \operatorname*{arg\,min}_{B \subseteq \Y} \ \E\big[ \ell_\lambda(Y,B)|X=x],
 \EEQ
 which turns out to be non-decreasing in $\lambda$ as soon as $V$ is submodular (see \cite{topkis1978minimizing} or Prop.~8.1 in \cite{bach2013learning}). 
 
 In order to  learn $A(\lambda,\cdot)$ for all $\lambda \geqslant 0$, a natural criterion to consider  is 
 \BEQ
 \label{eq:natural}
 \int_0^\infty \E\big[\ell_
 \lambda(Y,A(\lambda,X))\big] d\lambda,
 \EEQ
 which indeed leads to the solution in \eq{All}. Other weighting functions (depending on $\lambda$) could be considered, but, as seen below, the chosen one (no weights) leads to a nice closed-form formula (see the end of \mysec{finalloss} for alternatives).

\paragraph{Randomized predictions.} Throughout the paper, since the problem in \eq{optA} can only be solved efficiently and reliably with randomized predictions, we will consider both deterministic predictions where a single subset is output, but also randomized predictions where \emph{two} sets can be output with certain probabilities. Then, expectations with respect to this extra randomness are considered before expectations with respect to the data.

 \subsection{Integrated loss functions for additive measures}
A classical way of encoding non-decreasing set functions is through sup-level sets, that is, 
\BEQ
\label{eq:Alev}
A(\lambda,x) = \big\{ y \in \Y, \ g(x,y) \geqslant -\lambda\} = \{ g(x,\cdot) \geqslant -\lambda \big\},
\EEQ
where $g:\X \times \Y \to \rb$~\cite{osher1988fronts,osher2003geometric}. In order to account for all potential trade-offs between size and miscoverage, as defined in \eq{natural}, we consider the integrated loss function 
$$
\int_0^{+\infty}   \ell_
 \lambda(y,A(\lambda,x))  d \lambda =\int_0^{+\infty}   \big( V(\{ g(x,\cdot) \geqslant -\lambda\}) - V(\Y)+  \lambda 1_{\{ g(x,y) < -\lambda\}} \big) d\lambda,
 $$
(note that the term $- V(\Y)$ implies that the summand is integrable when $\lambda$ tends to $+
\infty$). 
The following lemma shows that when $V$ is a finite additive measure, there is a closed form for the integrated loss taken at $A$ defined in \eq{Alev}. To obtain simpler formulas, we assume that the function we take level sets of is non-positive and that the threshold $\lambda$ is non-negative, which we will relax later.
\begin{lemma}
\label{lemma:ext}
If $V$ is a finite additive measure, then for any $h:\Y\to \rb_-$ and additive measure $Q$ on $\Y$, we have (with $B^{\sf c}$ denoting the complement of the set $B$):
$$
\int_0^{+\infty} \Big( V(\{ h \geqslant -\lambda\}) - V(\Y) + \lambda Q( \{ h \geqslant -\lambda\}^{ \sf c}) \Big)d\lambda = \int_\Y h(z) dV(z) + \frac{1}{2}\int_\Y h(z)^2 dQ(z).$$
\end{lemma}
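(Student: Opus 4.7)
The plan is to handle the two pieces of the integrand separately, using the additivity of $V$ to turn the first term into an integral of an indicator of a sub-level set, and then applying Fubini's theorem (the "layer cake" identity) to both pieces.

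First, since $V$ is a finite additive measure, one has $V(\{h\geqslant -\lambda\}) - V(\Y) = -V(\{h < -\lambda\})$. Because $h \leqslant 0$, for $\lambda \geqslant 0$ the set $\{h < -\lambda\}$ coincides with $\{-h > \lambda\}$, where $-h \geqslant 0$. The layer cake identity for the non-negative function $-h$ with respect to the measure $V$ then gives
$$\int_0^{+\infty} V(\{-h > \lambda\})\,d\lambda = \int_\Y (-h(z))\,dV(z),$$
and so the contribution of the first two terms to the left-hand side equals $\int_\Y h(z)\,dV(z)$.

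For the third term, I would apply Fubini's theorem to the non-negative integrand $\lambda\,1_{-h(z)>\lambda}$ on $\rb_+ \times \Y$ (with respect to the product of Lebesgue measure and $Q$):
$$\int_0^{+\infty} \lambda\, Q(\{-h > \lambda\})\,d\lambda = \int_\Y \! \int_0^{-h(z)} \lambda\,d\lambda\,dQ(z) = \int_\Y \frac{h(z)^2}{2}\,dQ(z).$$
Adding the two contributions yields the claimed closed form.

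The only subtle point is justifying the swap of integrals, which is immediate from the non-negativity of $-h$ (guaranteed by $h \leqslant 0$) and the finiteness of $V$ and $Q$, ensuring all quantities are well-defined; there is no real obstacle beyond bookkeeping. Note that the hypothesis $h \leqslant 0$ is precisely what makes the rewriting $\{h\geqslant-\lambda\}^{\sf c}=\{-h>\lambda\}$ correspond to a genuine layer cake over $\lambda \in [0,+\infty)$, which is why the lemma is stated with this sign convention before being relaxed later in the paper.
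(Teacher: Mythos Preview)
Your proof is correct and follows essentially the same route as the paper: both split the integrand into the two pieces and apply the layer-cake representation (equivalently, Fubini on the product of Lebesgue measure and $V$ or $Q$) with weights $1$ and $\lambda$, which is exactly the paper's choice of $\varphi(\lambda)=\lambda$ and $\varphi(\lambda)=\tfrac{1}{2}\lambda^2$. Your version is slightly more explicit about the Fubini justification, but the argument is the same.
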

\begin{proof}
This is exactly the ``layer cake representation''~\cite[Section 1.13]{lieb2001analysis}: for any differentiable function $\varphi:\rb_+ \to \rb$ such that $\varphi(0)=0$ and any additive measure $Q$, $$\int_0^{+\infty} 
 \varphi'(\lambda) Q(\{h <  -\lambda\})d \lambda=\int_\Y \varphi(-h(z)) dQ(z),$$
 (which can be shown by starting with $Q$ being a Dirac measure\footnote{When $Q$ is a Dirac measure at some $u\in \Y$, then this corresponds to 
 $\int_0^{+\infty} 
 \varphi'(\lambda) 1_{h(u) <  -\lambda} d\lambda = \int_0^{-h(u)} \varphi'(\lambda)d\lambda= \varphi(-h(u))$.} and extending by averaging)
 which is applied to $\varphi(\lambda) =\lambda$ and  $\varphi(\lambda) =   \frac{1}{2} \lambda^2 $, and to the measures $V$ and $Q$. 
\end{proof}
This is the simplest formulation we consider in this paper for additive measures, leading to a new type of quadratic loss function. The layer-cake representation can be extended beyond additive measures, and is also then referred to as the co-area formula~\cite{chambolle2010introduction}, as done in the \mysec{lova} below.

 \subsection{\lova extension / Choquet integral}
 \label{sec:lova}
 A set-function $V: \mathcal{P}(\Y) \to \rb$ can be identified to a function $v$ on measurable functions $f: \Y \to \{0,1\}$ through the relationship
 $$
 \forall A \in \mathcal{P}(\Y), \ 
 v(1_A)=V(A),
 $$
 where $1_A: \Y \to \{0,1\}$ is the indicator function of the measurable set $A$.
 The function $v$ can be extended to functions $f: \X \to \rb_{\textcolor{red}{+}}$, through the Choquet integral formula~\cite{denneberg1994non}:
\BEQ
\label{eq:lovadef} v(f) = \int_{0}^{+\infty} V(\{f \geqslant t\}) dt
= \int_{0}^{+\infty} V(\{y \in \Y, \ f(y) \geqslant t\}) dt.
\EEQ
 If $f=1_A$ for some $A \subseteq \Y$, then
 we have $v(1_A) = \int_0^{+\infty} \big( V(A)1_{[0,1]}(t) dt + V(\varnothing)1_{(1,+\infty)}(t) \big) dt
 = V(A)$, and 
  we indeed recover $V(A)$ (we assumed $V(\varnothing)=0$). To define it for functions $f: \Y \to \rb$ (i.e., with potentially negative values), we cannot simply use $ \int_{-\infty}^{+\infty} V(\{f \geqslant t\}) dt$ as the integral is not convergent at $-\infty$ because the integrand is converging to $V(\Y)$. A simple modification leads to a positively homogeneous function, defined for all $f:\Y \to \rb$ (see \cite{bach2013learning}),
 \BEQ
\label{eq:choquet}
 v(f) = \int_{0}^{+\infty} V(\{f \geqslant t\}) dt + \int_{-\infty}^0 \big[ V(\{f \geqslant t\}) - V(\Y) \big] dt.
 \EEQ
 We refer to this extension as the \lova extension. 
 In particular, when $f$ is non-positive (which we need to extend Lemma~\ref{lemma:ext}), that is, $f:\Y \to \rb_-$, we get
 $$
 v(f) =  \int_{0}^{+\infty} \big[ V(\{f \geqslant -t\}) - V(\Y) \big] dt.
 $$
 Moreover, if $V$ is a Dirac at $u \in \Y$, we get from \eq{choquet}, for any $f: \Y \to \rb$, $v(f) =
 \int_{0}^{+\infty} 1_{f(u) \geqslant t} dt + \int_{-\infty}^0  [ 1_{f(u) \geqslant t} - 1]    dt = f(u). $
 This extends to additive measures, for which we get $v(f) = \int_\Y f(z) dV(z)$, that is, exactly the earlier layer cake representation used in the proof of Lemma~\ref{lemma:ext} (see more examples in \mysec{examples}).
 
 Overall, this leads to the following extension of Lemma~\ref{lemma:ext}.

 \begin{lemma}
\label{lemma:extgen}
For any $h:\Y\to \rb_-$ and additive measure $Q$, we have:
\BEQ
\label{eq:costlemma2}
\int_0^{+\infty} \big( V(\{ h \geqslant -\lambda\}) - V(\Y) + \lambda Q( \{ h \geqslant -\lambda\}^{\sf c})\big) d\lambda = v(h) + \frac{1}{2}\int_\Y h(z)^2 dQ(z).
\EEQ
\end{lemma}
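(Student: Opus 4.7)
The plan is to split the left-hand side into the two pieces corresponding to size and miscoverage, and recognize that each is already handled by material established just above the lemma. Write
\[
\int_0^{+\infty} \!\! \big( V(\{ h \geqslant -\lambda\}) - V(\Y) \big) d\lambda
\; + \;
\int_0^{+\infty} \!\! \lambda \, Q(\{ h \geqslant -\lambda\}^{\sf c}) \, d\lambda,
\]
which is legitimate provided both integrals converge separately; this follows because $h \leqslant 0$ ensures the first integrand is supported on $[0, \|h\|_\infty]$ in the bounded case (and one can argue by monotone convergence otherwise, since $V$ is bounded between $0$ and $V(\Y)$), while the second integrand is controlled by $\int \frac{1}{2} h(z)^2 dQ(z)$, which we are about to identify.

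For the first integral, I would invoke the definition of the \lova extension for a non-positive function displayed right before the lemma statement, namely
\[
v(h) = \int_0^{+\infty} \big[ V(\{h \geqslant -t\}) - V(\Y) \big] \, dt,
\]
which matches our first piece verbatim after renaming $t \leftrightarrow \lambda$. This is where submodularity plays no direct role: the identity only relies on the definition of the Choquet integral via level sets, which is valid for any set function $V$ with $V(\varnothing) = 0$.

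For the second integral, the plan is to reuse the layer-cake identity already proven in Lemma~\ref{lemma:ext}, applied with $\varphi(\lambda) = \tfrac{1}{2}\lambda^2$ (so $\varphi'(\lambda) = \lambda$) and with the additive measure $Q$. Since $\{h \geqslant -\lambda\}^{\sf c} = \{h < -\lambda\}$, the layer-cake formula gives
\[
\int_0^{+\infty} \lambda \, Q(\{h < -\lambda\}) \, d\lambda
= \int_\Y \varphi(-h(z)) \, dQ(z)
= \tfrac{1}{2} \int_\Y h(z)^2 \, dQ(z),
\]
where the last step uses $h \leqslant 0$ so that $-h(z) \geqslant 0$ and $\varphi(-h(z)) = \tfrac12 h(z)^2$. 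Adding the two pieces yields the right-hand side of \eq{costlemma2}.

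The only subtlety is justifying the splitting of the integral and the convergence of each term at $+\infty$; this is not really an obstacle, since the assumption $V(\Y) < \infty$ combined with $V(\{h \geqslant -\lambda\}) \uparrow V(\Y)$ as $\lambda \to \infty$ (by monotonicity of $V$ and of the level sets) makes the first integrand eventually zero on the support of $h$, and Tonelli's theorem on the non-negative integrand $\lambda \, 1_{h(z) < -\lambda}$ handles the second. With that justification in place, no further computation is needed beyond the two identities invoked above.
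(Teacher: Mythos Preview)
Your proposal is correct and follows exactly the approach the paper intends: the paper does not give a separate proof of Lemma~\ref{lemma:extgen} but presents it immediately after displaying the formula $v(h) = \int_0^{+\infty}[V(\{h \geqslant -t\}) - V(\Y)]\,dt$ for non-positive $h$, so that the $V$-part is the definition of $v(h)$ and the $Q$-part is the layer-cake identity already used in the proof of Lemma~\ref{lemma:ext} with $\varphi(\lambda)=\tfrac{1}{2}\lambda^2$. Your write-up is in fact more explicit than the paper's, which simply states the lemma as an immediate consequence of these two ingredients.
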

 This shows a precise link between the \lova extension and the integrated loss, \emph{without any assumptions} beyond measurability. With more assumptions, we get more properties, as we now show.

 \paragraph{Convexity and submodularity.} There is a strong link between properties of a set function: $V \to \mathcal{P}(\Y)$ and its \lova extension $v$: $V$ is submodular if and only if its \lova extension $v$ is convex, with several different proofs~\cite{lovasz1983submodular,bach2013learning,bach2019submodular}.
 Moreover, assuming from now on that $V$ is submodular, we can compute subgradients and explicit links between minimizers of an optimization problem in $v$, and a sequence of problems in $V$.

\paragraph{Computing values and subgradients of $v$.}
The function $v$ is convex, 1-homogeneous, and has a full domain. It can thus be represented as the supremum of linear functions over a bounded set of measures, that is, $v(f) = \sup_{ \mu \in B(V) } \int_\Y f(y) d\mu(y)$, where $B(V)$ is called the ``base polytope'' when $\Y$ is finite \cite[Section 4]{bach2013learning}, and the ``core'' in general~\cite[Chapter 10]{denneberg1994non}, and defined as
$$
B(V) = \big\{ \mu\mbox{ measure on }\Y ,  \ \mu(\Y) = V(\Y), \ \forall A \in \mathcal{P}(\Y), \mu(A) \leqslant V(A) \big\}.
$$
When $V$ is non-decreasing, then $B(V)$ happens to be composed only of nonnegative measures. Maximizers $\mu$ for a given~$f$ (that are subgradients of $v$ at $f$) can be obtained from level sets of $f$ through a so-called ``greedy algorithm,'' which, for finite sets~$\Y$, sorts the values of $f$ and computes values of $V$ as level-sets of $f$. Note that this possibility of computing subgradients leads to polynomial time algorithms for submodular function minimization~\cite{lovasz1983submodular}. In the general case, when $f $ takes $m$ values $w_1> \cdots > w_m$ ordered in strictly decreasing order on sets $B_1,\dots,B_m$ that form a partition of $\Y$, the minimizers satisfy $\mu(B_i) = V(B_1 \cup \cdots \cup B_i) - V(B_i)$ for all $i \in \{1,\dots,m\}$. For our examples in \mysec{examples}, maximizers will have explicit formulas.

\paragraph{Submodular function minimization.}
Since $v$ is an extension of $V$, we
have
$$ \inf_{B \subseteq \Y} V(B) = \inf_{f: \Y \to \textcolor{red}{\{}0,1 \textcolor{red}{\}}} v(f).$$
When $V$ is submodular, this happens to be equal to \BEQ
\label{eq:SFM}
\inf_{f: \Y \to  \textcolor{red}{[}0,1 \textcolor{red}{]}} v(f),
\EEQ
 which is now a convex optimization problem (see, e.g.,~\cite{bach2013learning}).

Given a solution $f$ of \eq{SFM}, with real values in $[0,1]$, the randomized prediction rule defined by $\{ y \in \Y, \ f(y) \geqslant t\}$, with $t$ uniformly distributed in $[0,1]$, leads to the optimal value (by definition of the \lova extension). It turns out that a deterministic minimizer can be found from its sup-level sets.

Constrained optimization, however, even with the simplest modular constraint, cannot be solved in polynomial time~\cite{cormen2022introduction}. Indeed, for a measure $\mu$ on $\Y$, the problem 
$$ \inf_{B \subseteq \Y} V(B)  \mbox{ such that } \mu(B) \leqslant c \ \ \ = \ \ \ \inf_{f: \Y \to \textcolor{red}{\{}0,1\textcolor{red}{\}}} v(f)  \mbox{ such that } \int_\Y f(y) d\mu(y) \leqslant c$$
can be strictly greater than 
\BEQ
\label{eq:SFMc}
\inf_{f: \Y \to \textcolor{red}{[}0,1\textcolor{red}{]} } v(f)  \mbox{ such that } \int_\Y f(y) d\mu(y) \leqslant c,
\EEQ
except for a small number of values of $c$.
From a solution $f$ of \eq{SFMc}, the randomized prediction rule is optimal among all randomized prediction rules, with $V(B)$ and $\mu(B) $ replaced by expectations over the randomness of the rule. However, the function $f$ with real values cannot be used to obtain an optimal deterministic solution (as opposed to the unconstrained case).


\paragraph{Links between optimization problems.} Our loss function based on Lemmas~\ref{lemma:ext} and~\ref{lemma:extgen} leads, once specialized to a single $x \in \X$, to convex optimization problem of the form
\BEQ
\label{eq:vp}
\inf_{ f: \Y \to \rb} v(f) + \int_\Y \varphi(f(z),z) dQ(z), 
\EEQ
where for each $z \in \Y$, $\varphi(\cdot,z)$ is convex and $Q$ a positive additive measure (e.g., in \eq{costlemma2} from Lemma~\ref{lemma:extgen}, $\varphi(f(z),z) = \frac{1}{2} f(z)^2$). It turns out~\cite[Section 8]{bach2013learning} that its solutions are related to a
sequence of set-optimization problems, for $\lambda \in \rb$,
\BEQ
\label{eq:Vp}
\inf_{B \subseteq \Y} \Big\{ V(B) + \int_B \varphi'(\lambda,z) dQ(z)\Big\}
= \inf_{ f: \Y \to \{0,1\}} \Big\{ v(f) + \int_\Y  \varphi'(\lambda,z)f(z)  dQ(z) \Big\}.
\EEQ
The following lemma (see proof in \cite[Prop.~8.5]{bach2013learning}) shows that the suboptimality gap for \eq{vp} is the integral of the submodularity gaps for \eq{Vp} over $\lambda$, with the candidate sets that are sup-level sets of~$f$.  This is much stronger than the definition of the \lova extension through sup-level sets. Primal-dual guarantees also exist~\cite[Prop.~8.5]{bach2013learning}.
\begin{lemma}
\label{lemma:calibration}
Assume that for each $z\in \Y$, $\varphi(\cdot,z) $ is strictly convex and differentiable on $\rb$ and such that its Fenchel conjugate has full domain. Assume $Q$ is a positive additive measure.
    For any $f:\Y \to \rb$, we have:
    \BEAS
    & & v(f) + \int_\Y \varphi(f(z),z) dQ(z) - 
    \inf_{ g: \Y \to \rb} \Big\{ v(g) + \int_\Y \varphi(g(z),z) dQ(z) \Big\}
    \\
    & = & 
    \int_{-\infty}^{+\infty}
    \bigg[
    V(\{f \geqslant \lambda\}) + \int_{\{f \geqslant \lambda\}} \varphi'(\lambda,z) dQ(z) -
\inf_{B \subseteq \Y} \Big\{ V(B) + \int_B \varphi'(\lambda,z) dQ(z)  \Big\}  
    \bigg] d\lambda.
    \EEAS
\end{lemma}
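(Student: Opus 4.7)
The strategy is to derive a layer-cake representation of $G(f) := v(f) + \int_\Y \varphi(f(z),z) dQ(z)$ which expresses it, up to an $f$-independent constant, as an integral over $\lambda \in \rb$ of $H_\lambda(\{f \geq \lambda\})$, where $H_\lambda(B) := V(B) + \int_B \varphi'(\lambda,z) dQ(z)$. Doing the same for $\inf_g G(g)$ and subtracting then yields the claimed identity.

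The first step is to rewrite both pieces of $G(f)$ as $\lambda$-integrals. For $v(f)$ this is already the definition \eq{choquet} of the \lova extension. For the $\varphi$-term, by the fundamental theorem of calculus
$$\varphi(f(z),z) - \varphi(0,z) = \int_0^{+\infty} \varphi'(\lambda,z) \mathbf{1}_{f(z) \geq \lambda} d\lambda - \int_{-\infty}^0 \varphi'(\lambda,z) \mathbf{1}_{f(z) < \lambda} d\lambda,$$
valid for both signs of $f(z)$. Integrating against $Q$ and swapping integrals by Fubini (using that $\varphi'(\lambda,\cdot)$ is $Q$-integrable on any bounded $\lambda$-range, ensured by strict convexity and the full-domain conjugate assumption), then adding to \eq{choquet} and collecting the $V$- and $\varphi'$-terms at each $\lambda$, gives
$$G(f) = C + \int_0^{+\infty} H_\lambda(\{f \geq \lambda\}) d\lambda + \int_{-\infty}^0 \bigl[H_\lambda(\{f \geq \lambda\}) - H_\lambda(\Y)\bigr] d\lambda,$$
with $C = \int_\Y \varphi(0,z) dQ(z)$ independent of $f$.

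The lower bound $G(g) \geq C + \int_0^{+\infty} \inf_B H_\lambda(B) d\lambda + \int_{-\infty}^0 [\inf_B H_\lambda(B) - H_\lambda(\Y)] d\lambda$ holds for every $g$ by pointwise comparison under the $\lambda$-integral. The reverse inequality requires exhibiting a $g^\ast$ whose sup-level sets $\{g^\ast \geq \lambda\}$ minimize $H_\lambda(\cdot)$ for almost every $\lambda$. This is where submodularity is essential: for each $\lambda$, $B \mapsto H_\lambda(B)$ is submodular (sum of the submodular $V$ and a modular function), and by strict convexity of $\varphi(\cdot,z)$, the modular coefficient $\varphi'(\lambda,z)$ is strictly increasing in $\lambda$ at every $z$. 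A Topkis-type monotone-comparative-statics argument for submodular minimization (as used in Prop.~8.5 of \cite{bach2013learning}) then shows that the maximal minimizer $B^\ast_\lambda$ of $H_\lambda$ is non-increasing in $\lambda$. Setting $g^\ast(z) = \sup\{\lambda \in \rb : z \in B^\ast_\lambda\}$ yields a measurable function with $\{g^\ast \geq \lambda\} = B^\ast_\lambda$ up to a $\lambda$-null set, saturating the lower bound.

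The main obstacle is this monotonicity-of-minimizers step: showing that the family $(B^\ast_\lambda)_{\lambda \in \rb}$ of maximal minimizers of the submodular functions $H_\lambda$ nests into the sup-level sets of a single measurable function. Integrability at $\lambda = \pm \infty$ is not an issue since the full-domain-conjugate assumption forces $\varphi'(\lambda,z) \to \pm\infty$ as $\lambda \to \pm\infty$, so that $\{f \geq \lambda\}$ and $B^\ast_\lambda$ both collapse to $\varnothing$ (resp.\ $\Y$) for $\lambda$ large positive (resp.\ negative), making the integrand in the claimed identity vanish outside a bounded range of $\lambda$.
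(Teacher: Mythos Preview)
Your plan is correct and matches the approach the paper defers to: the paper does not give its own proof but cites Prop.~8.5 of \cite{bach2013learning}, and your layer-cake decomposition of $G(f)$ into $\int H_\lambda(\{f\geqslant\lambda\})\,d\lambda$ (up to the $f$-independent constant $C$ and the $H_\lambda(\Y)$ correction for $\lambda<0$), followed by the Topkis-style monotone selection of maximal minimizers $B^\ast_\lambda$ to saturate the lower bound, is exactly the argument found there. You have correctly identified the only nontrivial step (nesting the minimizers into the sup-level sets of a single function) and handled the tail integrability via the full-domain-conjugate assumption; the remaining Fubini justification is routine once one assumes, as the lemma implicitly does, that $G(f)$ is finite.
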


\subsection{Final loss function}
\label{sec:finalloss}
Given properties of the \lova extension described in \mysec{lova}, we propose the following loss function, for $y \in \Y$ and $f: \Y \to \rb$,
\BEQ
\label{eq:loss}
\ell: (y,f) \mapsto v(f) + \frac{1}{2} f(y)^2,
\EEQ
without any restrictions on the negativity of $f$.
The following proposition is the key contribution of this paper and a direct consequence of Lemma~\ref{lemma:calibration}.

\begin{proposition}
\label{prop:main}
 For any function $g: \X \times \Y \to \rb$, and the loss defined in \eq{loss},
\BEAS
 & & \E[ \ell( Y, g(X,\cdot)) ] 
- \inf_{h: \X \times \Y \to \rb}\E[ \ell( Y, h(X,\cdot)) ] \\
& = & 
\E \bigg[
\int_{-\infty}^{+\infty}\Big(
V(\{ g(X,\cdot) \geqslant -\lambda\}) +  
\lambda \P(  \{ g(X,\cdot) \geqslant -\lambda\}^{\sf c}| X)  - \inf_{B \subseteq \Y} \big\{ V(B) + \lambda \P( Y \notin B | X) \big\}
\Big) d\lambda \bigg].
\EEAS   
\end{proposition}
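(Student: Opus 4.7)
The plan is to reduce the statement to a per-$x$ application of Lemma~\ref{lemma:calibration}, with a clean change of variable $\lambda \mapsto -\lambda$ to reconcile the sup-level-set convention used in Proposition~\ref{prop:main} with the level-set convention of the calibration lemma.

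First I would use the tower property to write, for any $h: \X \times \Y \to \rb$,
$$
\E[\ell(Y, h(X,\cdot))] = \E\Big[v(h(X,\cdot)) + \tfrac{1}{2}\int_\Y h(X,z)^2\, dQ_X(z)\Big],
$$
where $Q_x$ denotes the conditional distribution of $Y$ given $X=x$. Since $h(x,\cdot)$ can be chosen independently for each $x$, the global infimum decomposes into the per-$x$ infima of $v(f) + \tfrac{1}{2}\int_\Y f(z)^2 dQ_x(z)$, so it suffices to prove the pointwise identity at almost every $x$ and then integrate.

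Next, for fixed $x$, I would apply Lemma~\ref{lemma:calibration} with the measure $Q=Q_x$ and the cost $\varphi(u,z) = \tfrac{1}{2}u^2$. The strict convexity, differentiability and full-domain Fenchel conjugate are trivially satisfied, and $\varphi'(\lambda, z) = \lambda$. With $f = g(x,\cdot)$, the lemma gives
$$
\E[\ell(Y, g(X,\cdot))\mid X=x] - \inf_{f:\Y\to\rb}\{v(f) + \tfrac{1}{2}\textstyle\int_\Y f(z)^2 dQ_x(z)\}
= \int_{-\infty}^{+\infty} \Delta_x(\lambda)\, d\lambda,
$$
where $\Delta_x(\lambda) = V(\{g(x,\cdot)\geqslant \lambda\}) + \lambda Q_x(\{g(x,\cdot)\geqslant \lambda\}) - \inf_{B\subseteq\Y}\{V(B) + \lambda Q_x(B)\}$.

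Finally, I would perform the substitution $\lambda \mapsto -\lambda$ in the outer integral and rewrite $Q_x(\{g(x,\cdot)\geqslant -\lambda\}) = 1 - Q_x(\{g(x,\cdot)\geqslant -\lambda\}^{\sf c})$. The constant $\pm\lambda$ terms produced by this rewriting appear in both the level-set term and inside the infimum, so they cancel, leaving exactly
$$
V(\{g(x,\cdot)\geqslant -\lambda\}) + \lambda Q_x(\{g(x,\cdot)\geqslant -\lambda\}^{\sf c}) - \inf_{B\subseteq\Y}\{V(B) + \lambda Q_x(B^{\sf c})\},
$$
which matches the integrand in the proposition once we recall $Q_x(B^{\sf c}) = \P(Y\notin B\mid X=x)$. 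Taking expectation over $X$ yields the claim.

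The only mildly delicate point is the sign/complement bookkeeping in the final step: one must verify that the constants $\pm\lambda$ cancel between the level-set term and the infimum (which is true because both are affected identically by the complementation) so that the resulting integral is well defined on all of $\rb$ without needing any assumption on the sign of $g$, unlike Lemmas~\ref{lemma:ext} and~\ref{lemma:extgen}. Everything else is a routine combination of Fubini (allowed by the integrability argument implicit in the calibration lemma) and the conditional decomposition of the risk.
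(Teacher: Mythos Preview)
Your proposal is correct and follows exactly the route the paper indicates: the paper simply states that Proposition~\ref{prop:main} ``is a direct consequence of Lemma~\ref{lemma:calibration},'' and you have spelled out that consequence carefully---conditioning on $X$, applying the lemma with $Q=Q_x$ and $\varphi(u,z)=\tfrac{1}{2}u^2$, and performing the sign change with the cancellation of the $\pm\lambda$ constants. There is nothing to add.
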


It exactly says that a minimizer $g^\ast$ of $\E[ \ell( Y, g(X,\cdot)) ] $ (what a supervised learning algorithm aims to do), will lead to minimizer of the optimal \emph{conditional} coverage problem $\inf_{B \subseteq \Y} \big\{ V(B) + \lambda \P( Y \notin B | X=x)\big\}$ for all $x \in \X$ and $\lambda \in \rb$, by selecting 
$$A(x) = \{ g(x,\cdot) \geqslant -\lambda\} = \{ y \in \Y, \ g(x,y) \geqslant -\lambda \}.$$
In other words, we defined a proper scoring rule~\cite{gneiting2007strictly} for estimating all size-optimal conditional coverage sets. Moreover, Prop.~\ref{prop:main} shows that there is even a ``calibration function'' relating the excess risk of our convex loss to the excess risks of all problems in $\lambda$, as for convex surrogates for binary classification~\cite{zhang2004statistical,bartlett2006convexity}.
  
Note that when $\lambda<0$, the minimizer is $B = \varnothing$, and there is no contribution from this part.

\paragraph{Alternative weighting functions.}
Other weighting functions between $V(B)$ and $\P( Y \notin B | X=x)$ than (up to constants) $\int_0^{+\infty}  [ V(B) + \lambda \P( Y \notin B | X=x)  ] d\lambda$, could be used to provide different trade-offs, such as $\int_0^{+\infty}  [ a(\lambda) V(B) + \lambda b(\lambda) \P( Y \notin B | X=x)  ] d\lambda$ for functions $a$ and $b$. This would lead to non-quadratic loss functions, but we focus on the simpler case that leads to quadratic loss functions.

\section{Examples of submodular functions}
\label{sec:examples}

 Any example of non-decreasing submodular functions from~\cite{fujishige2005submodular,bach2013learning} can be used. The two classes we will consider in this paper are:

\BIT
\item \textbf{Additive non-negative measures} (i.e., non-decreasing modular functions): $V(A) = \int_A dM(y)=M(A)$ for some non-negative finite measure $M$ on $\Y$. Then, the \lova extension is $v(g) = \int_\Y g(y) dM(y)$, and the risk based on the loss function in \eq{loss}
is $$\mathcal{R}(g) = \E \big[ \ell(Y,g(X,\cdot))] = \E \Big[ \int_\Y g(X,z) dM(z) + \frac{1}{2} g(X,Y)^2 \Big],$$ with an optimal function $g^\ast(x,y) = - \big( \frac{dp(y|x)}{dM(y)} \big)^{-1}$.

In terms of gradient for the loss at an observation $(x,y)$, we can obtain an unbiased one by simply sampling $z$ from $M$ and taking the gradient of $g(x,z) + \frac{1}{2}g(x,y)^2$.

It is a form of square loss, but different from the standard one used in least-squares regression. We could also consider concave functions of such functions, that is, $\varphi(M(A))$.

\item \textbf{Set-covers}: Given a function $S: \Y \to \mathcal{P}(\mathcal{Z})$, then $V(A) = M\big( \bigcup_{y \in A} S(y) \big)$ for $M$ a non-negative measure on a set $\mathcal{Z}$ (in most cases, $\mathcal{Z} = \Y$), is submodular. We then have: $$
v(g) = \int_\mathcal{Z} \Big\{ \sup_{S(y) \ni z } g(y) \Big\}  dM(z).
$$
In terms of gradient, we can obtain an unbiased one by simply sampling $z$ from $M$ and taking the gradient of $  \sup_{S(t) \ni z } g(t)  + \frac{1}{2}g(x,y)^2$, which requires to solve a maximization problem, which we assume solvable in this paper (e.g., in low dimensions by grid search).
\EIT

 We now precisely describe the classic examples that we will consider in our experiments. For each of them, we will define as well a positive additive measure $M$ such that $V-M$ is non-negative, and $V(\Y)= M(\Y)$.

\subsection{Finite sets with cardinality-based functions}
We consider in this example a finite set $\Y$ with $k$ elements, which we identify to $\Y = \{1,\dots,k\}$, and consider 
$$V(A)= \varphi(|A|),$$
where $\varphi$ is a non-decreasing concave function and $|A|$ the cardinality of $A$. We can then parameterize a function from $\X \times \Y$ to $\rb$ as $k$ functions $g_1,\dots,g_k: \X \to \rb$, or $g: \X \to \rb^k$. This provides new loss functions for multicategory classification. The associated measure $M$ is $M(A) = |A| \varphi(k) / k$.

\paragraph{Cardinality.} For $V(A)=\frac{1}{k} |A|$, we have $v(f) = f^\top 1_k / k$, and  the loss function we consider is
$$\ell(y,g(x)) = \frac{1}{k} \sum_{i=1}^k g_i(x) + \frac{1}{2} g_y(x)^2,$$
which is quadratic in $g(x)$, but different from the usual quadratic loss $\sum_{i=1}^k ( 1_{y=i} - g_i(x))^2$. Note that when~$k$ is large, there is an unbiased estimate by sampling $i$ uniformly on $\{1,\dots,k\}$ and taking $g_i(x)$ instead of $\frac{1}{k} \sum_{j=1}^k g_j(x)$, which does not require access to the whole vector $g(x)$ (as opposed to the softmax loss).

The optimal function is then $  g^\ast_i(x) = - \frac{1}{k} \big(\P(Y=i|X=x)\big)^{-1}$, while it is $g^\ast_i(x) = \P(Y=i|X=x)$ for the regular square loss.

\paragraph{General concave functions.} If we consider $V(A) = \varphi(|A|)$, for $\varphi$ concave such that $\varphi(0)=0$, then the
loss function can be computed from the order statistics of $g(x)$ as follows (see \cite[Section 6.1]{bach2013learning})  
$$\ell(y,g(x)) = \sum_{i=1}^k ( \varphi(i)-\varphi(i-1)) g_{\sigma(i)}(x)+ \frac{1}{2} g_y(x)^2,$$
for any ($x$-dependent) bijection $\sigma: \{1,\dots,k\} \to \Y$ such that $g_{\sigma(1)}(x) \geqslant \cdots \geqslant g_{\sigma(k)}(x)$. It can be rewritten as
$$\ell(y,g(x)) = (\varphi(k) -\varphi(k-1) )\sum_{i=1}^k g_i(x) + \sum_{r=1}^k ( 2 \varphi(r) - \varphi(r\!+\!1)-\varphi(r\!-\! 1)) \sum_{i=1}^r g_{\sigma(i)}(x),
$$
in terms of non-negative linear combinations of $\sum_{i=1}^r g_{\sigma(i)}(x)$, the sums of the $r$ largest components of~$g(x)$ (a classical convex function of $g(x)$~\cite{boyd2004convex}). See \mysec{algorithms} for a reweighted least-squares formulation.

  The optimal function can be obtained
for a fixed $x \in \X$ by sorting the vector of posterior probabilities of $Y$ given $X=x$, that is, $\pi_{\sigma(1)}(x) \geqslant \cdots \geqslant \pi_{\sigma(k)}(x)$, where $\sigma: \{1,\dots,k\} \to \Y$ is a bijection and $\pi_y(x) = \P(Y=y|X=x)$, then leading to the minimization of
$$
\frac{1}{2}\sum_{i=1}^k \pi_{\sigma(i)}(x)  f_{\sigma(i)}^2 + \sum_{i=1}^k ( \varphi(i)-\varphi(i-1)) f_{\sigma(i)}
= \frac{1}{2}\sum_{i=1}^k \pi_{\sigma(i)}(x) 
\Big[
f_{\sigma(i)} + \frac{\varphi(i)-\varphi(i-1)}{\pi_{\sigma(i)}(x) }
\Big]^2 + \mbox{ cst} ,
$$
subject to the constraint that $f_{\sigma(1)} \geqslant \cdots \geqslant f_{\sigma(k)}$, which can be solved by isotonic regression in time $O(k)$ by the pool-adjacent-violators algorithm~\cite{best1990active}. If $\varphi$ is linear then the solution is exactly $g^\ast_{\sigma(i)}(x) = -\frac{\varphi(i)-\varphi(i-1)}{\pi_{\sigma(i)}(x) }$, otherwise, there is pooling of components together.

An interesting subcase is $\varphi(|A|) = \min\{|A|,r\}$ for $r \in \mathbb{N}$. If $r=1$, then the solution of the problem above is always $f$ constant equal to $-1$ (that is, full collapse: nothing is learned). For $r=2$, there is a full collapse if $\pi_{\sigma(1)}(x) \leqslant 1/2$, while if $\pi_{\sigma(1)}(x) > 1/2$, $f_{\sigma(1)}$ is strictly larger than all other components (which are all equal), and the optimal prediction for 0-1 loss can be recovered as the unique largest value (the same consistency condition as structured support vector machine~\cite{liu2007fisher}). This is illustrated in \myfig{effect_concave}. More generally, for any $r$, the cost function $\varphi(|A|) = \min\{|A|,r\}$ will lead to a collapsed prediction only if $\pi_{\sigma(1)}(x) \leqslant 1/r$.

Note that it seems that considering a concave function $\varphi$ of $|A|$ can only be less efficient than using~$|A|$, as the optimal prediction function with $\varphi(|A|)$ is a non-injective function of the one for  $|A|$ (that is, some values are coalescing and some conditional probabilities cannot be recovered), and the trade-offs between the coverage of a set $A$ and $V(A)$ are equivalent if $V$ is replaced by an increasing function $\varphi \circ V$. However, this allows to learn ``simpler'' functions and can lead to more efficient estimation procedures than plainly using $|A|$ (see \mysec{experiments} for examples, as well as the simple situation where an estimation model enforces a small number of values of the prediction function, which the concave penalty clearly deals well with, while the non-concave penalty could cluster incorrectly). In other words, the loss function only focuses the modelling power of the prediction function towards outputs $y$ that have a chance to be included in the top predictions.

\paragraph{Beyond cardinality-based functions.} For discrete problems, various types of prior knowledge can be encoded to go beyond plain cardinality, such as the presence of groups (like in group Lasso) or hierarchies, in a similar spirit as for structured sparsity~\cite{bach2012structured}. One could also design specific submodular penalties for sets of permutations or problems with multiple labels.

 \begin{figure}
   \centering
     \includegraphics[width=.7\linewidth]{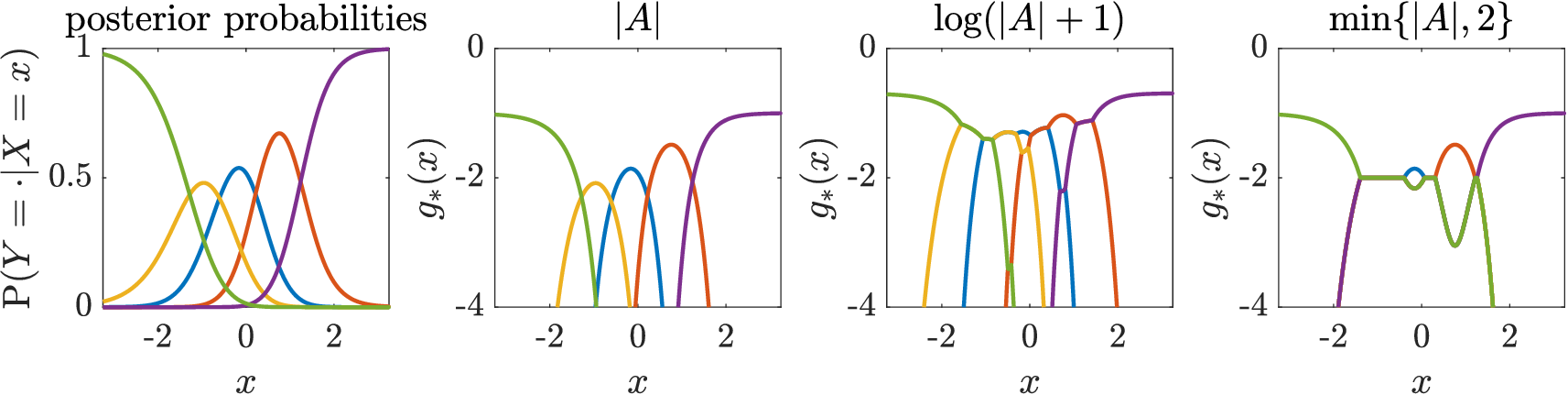}
     
     \vspace*{-.125cm}
     
     \caption{Effect of concave penalty. Left to right: posterior probabilities, optimal prediction functions for three concave penalties, for $\X=\rb$ and $k=5$ classes.  
     \label{fig:effect_concave}}
\end{figure}

\subsection{Regression}
\label{sec:regression}

In this section, we consider $\Y \in \rb^k$ equipped with a probability measure $M$ (typically uniform on a compact subset like in examples below, Gaussian, or with heavier tails such as a student distribution). We first consider the simple modular function and then more complex situations that favor certain types of sets.

\paragraph{Modular functions.}
With $\Y=\rb^k$ and $V(A)= \int_A dM (y)=M(A)$ with $M$ a probability measure, the cost function on $g:\X\times \Y \to \rb$ becomes
$$\ell(y,g(x)) = \int_\Y g(x,z) dM(z) + \frac{1}{2} g(x,y)^2,$$
 which has a natural unbiased estimate $g(x,z) + \frac{1}{2} g(x,y)^2$ where $z$ is sampled from $M$. This favors sets with small mass, regardless of their shape, with an optimal function equal to $ g^\ast(x,y) = - \big( \frac{dp(y|X=x)}{dM(y)}\big)^{-1}$.

\paragraph{Mathematical morphology based functions.}
We consider a set $K\subset \rb^k$ referred to as a ``structuring element,'' which we assume centrally symmetric (that is, $K=-K$), typically a ball of center $0$ and radius~$r$. We then consider the set-cover
\BEQ
\label{eq:morphomat}
V(A)= M\bigg(\bigcup_{y \in A} \big( \{ y \} +K \big) \bigg).
\EEQ
In the language of mathematical morphology~\cite{najman2013mathematical}, the set $\bigcup_{y \in A} \big( \{ y\} +K \big) $ is the \emph{dilation} of the set $A$. In contrast, the \emph{erosion} of a set $A$ is the set $\big\{y \in \Y,  \{y\} + K \subset A \big\}$. The closure is then the composition of the dilation and then the erosion of $A$, while the opening is the composition of the erosion and then the dilation of $A$. See illustrations in \myfig{morphomat}. Closed sets are sets equal to their closures, and open sets are sets equal to their openings. For our function $V$, sets $A$ have the same value as their closure, so possible estimated sets are all closed, that is, no small holes and no small isolated components.

\begin{figure}
\centering
     \includegraphics[width=.75\linewidth]{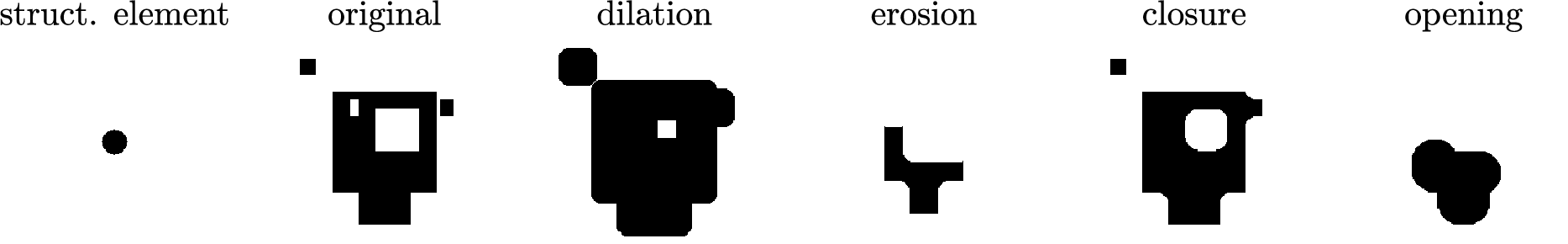}
     
     \vspace*{-.125cm}

\caption{Examples of erosion, dilation, opening, closure of a set from the structuring element (left panel) in two dimensions. The size function defined in \eq{morphomat} leads to sets that are equal to their closure, thus without small holes compared to the size of the structuring element.
\label{fig:morphomat}}\end{figure}

\paragraph{Choquet integral / \lova extension.}
It can then be shown that, for any function $f: \X \to \rb$,
$$v(f)=\int_\Y \Big\{\sup_{t \in K} f(z+t) \Big\}dM(z),$$
for which an unbiased estimate can be obtained as $\sup_{t \in K} f(z+t) $ where $z$ is sampled from $M$ (this is useful for optimization by stochastic gradient descent). The function $z \mapsto \sup_{t \in K} f(z+t)$  also defines a morphological operation (a dilation), now on real-valued functions and not only on binary-valued functions (which can be identified to sets)~\cite{najman2013mathematical}.   
In \mysec{algorithms}, we show how we can perform optimization using reweighted least-squares algorithms.

\paragraph{Optimal functions.} For a given $x \in \X$, the optimal function $f_\ast = g_\ast(x,\cdot)$ optimizes
$$ v(f) + \int_{\Y} f(y)^2 d\pi(y),$$
where $\pi(y) = p(y|X=x)$. For $\Y = \rb$, we show several optimal functions $f_\ast$ for a given probability distribution $\pi$ in \myfig{morphomat_estimates}: with increasing radius, there are more flat parts, and when taking level sets, the obtained sets would have fewer holes.

\begin{figure}
\centering
     \includegraphics[width=.8\linewidth]{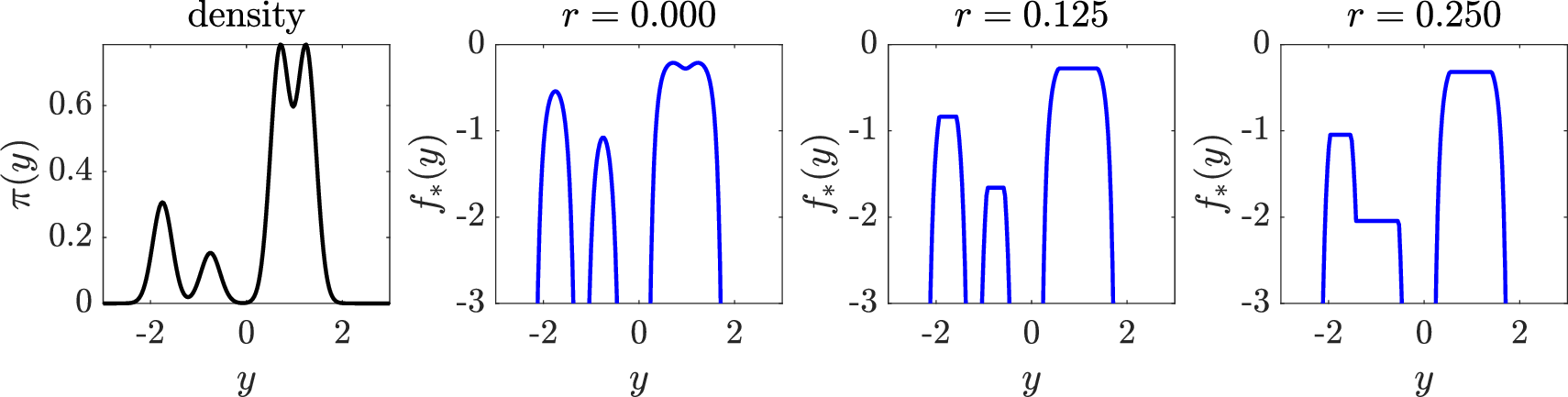}
     
     \vspace*{-.125cm}

\caption{Examples of estimation of functions. From left to right: density $\pi$, estimation with increasing radius $r$ of structuring elements (and increasingly larger piecewise constant parts). 
\label{fig:morphomat_estimates}}\end{figure}

 \subsection{Additional examples}
 \label{sec:addex}
 Like in structured sparsity~\cite{bach2012structured}, additional examples could be considered based on graphs (see~\cite{obozinski2016unified,bach2013learning}).
Some non-decreasing functions for which the problem in \eq{optAlag} is tractable are natural but are not submodular (similar developments could be carried out as future work, although the proper representation of sets and the associated layer-cake formulas remain to be determined), such as:
\BIT
\item Given an additive measure $M$ and a family $\mathcal{A}$ of subsets of $\Y$ (e.g., convex sets, ellipsoids, balls), $V(B) = \inf_{ A \in \mathcal{A}, \ A \supseteq B} M(A)$, that is, the smallest set (for $\mu$) in $\mathcal{A}$ containing $B$. We have $V(B) \geqslant M(B)$ and $V(A) = M(A)$ for $A \in \mathcal{A}$. This function is submodular if and only if the family $\mathcal{A}$ is a lattice (that is, closed under intersection and union); for example, the ancestor sets of a directed acyclic graph. Beyond submodularity, we would get tractable problems if $\mathcal{A}$ has a manageable size.

\item Given a family $\mathcal{A}$ of subsets of $\Y$, with a function $M: \mathcal{A} \to \rb_+$, the minimal weighted set cover
$$
V(C) = \inf_{  \mathcal{B} \subset \mathcal{A}, \ C \subseteq \bigcup_{B \in \mathcal{B} } \! B} \sum_{B \in \mathcal{B}} M(B),
$$
which has a traditional linear-programming relaxation that is tractable when the family $\mathcal{B}$ is sufficiently small.
Note that this function is typically not submodular~\cite{fujishige2005submodular}.

\EIT

\section{Alternative area-based loss function}
\label{sec:otherlosses}
Given  a real-valued function $g:\X \times \Y \to \rb$, which defines the subset-valued function $A: \rb \times \X \to \P(\Y)$ through
$$
A(\lambda,x) = \{ y \in \Y, \ g(x,y) \geqslant -\lambda \},
$$
the loss defined in \eq{loss} leads to ``Fisher-consistent'' estimation, that is, if $g$ minimizes $\E[ \ell(Y,g(X,\cdot))]$, then the subset-valued function $A$ leads for all $\lambda \in \rb$ to the optimal prediction for $\inf_{B \subseteq \Y} V(B) + \lambda \P( Y \notin B|X=x)$, and then to the optimal prediction for conditional coverage (see Prop.~\ref{prop:main}).

More generally, given a subset-valued function $A$ (which may or may not be obtained through level-sets of a real-valued function) and for which we will use the real variable $\nu$ to avoid confusion, other losses can be naturally defined based on the performance curve in the ``size vs.~coverage'' plane, akin to the receiver operating characteristic (ROC) curve and the area under it, that are commonly used in binary classification~\cite{hand2023notes,tharwat2021classification,berta2024classifier}.

\paragraph{Definitions through interpolations.}

For a given function $A: \rb \times \X \to \P(\Y)$ which is non-decreasing in its first argument, for a given $x \in \X$, we obtain a ``curve'' 
$$(s_A(\nu,x),\alpha_A(\nu,x))_{\nu \in \rb} = \big(V(A(\nu,x)),\P(Y \notin A(\nu,x)|X=x)\big)_{\nu \in \rb}$$ in the two-dimensional (size, coverage) plane, a curve which requires knowledge of the conditional distribution of $Y$ given $X=x$. Since the function $A$ is assumed non-decreasing in $\nu$, this curve is non-increasing in the plane $(s,\alpha)$, and belongs to $[0,V(\Y)] \times [0,1]$. However, in particular for discrete set $\Y$, for any $x \in \X$, the set $\{ s_A(\nu,x), \ \nu \in \rb\}$ is strictly included in $[0,1]$, and can be a finite set of points (i.e., when the function $g(x,\cdot)$ takes only $m(x)$ many values, then $m(x)+1$ sets $\varnothing \subseteq B_0(x) \subsetneq B_1(x) \subsetneq B_2(x) \subsetneq B_m(x) \subseteq \Y$ are possible). For simplicity, in this section, we assume that we are always in this discrete situation for all $x \in \X$ (extensions could be obtained by considering integrals instead of sums to define areas, with potentially sampling to estimate them).

Note that we can have two sets of the same size that are different and lead to different coverage probabilities (which can happen only when the function $V$ is not strictly increasing, e.g., with set covers).

In order to extend the curve into a continuous one, several approaches are possible, with four possible curves (and thus areas), as illustrated in \myfig{areas}:

\begin{figure}[h]
    \centering
    \begin{center}
\includegraphics[width=9cm]{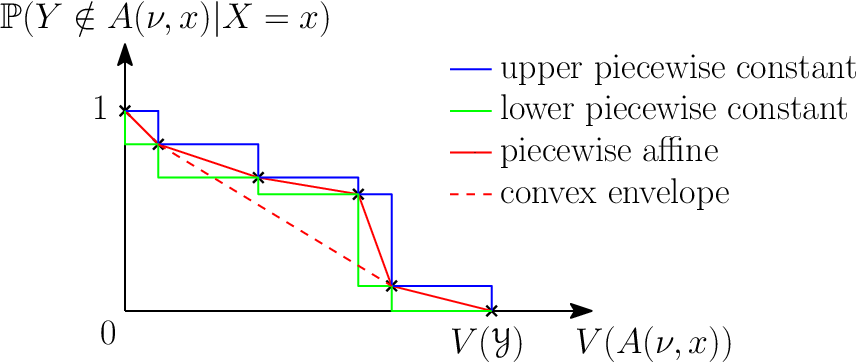}
\end{center}

\vspace*{-.3cm}

    \caption{Given a finite set of points $\big(V(A(\nu,x)),\P(Y \notin A(\nu,x)|X=x)\big)_{\nu \in \rb}$ (black crosses), three different interpolants can be defined, each with its own area. }
    \label{fig:areas}
\end{figure}

\BIT
\item \textbf{Upper piecewise constant interpolation} $(s_A^+(\nu,x),\alpha_A^+(\nu,x))_{\nu \in \rb}$. 
This defines ${\rm area}^+(A,x)$, the area below the curve.
When $A(\cdot,x)$ takes $m(x)$ distinct values as detailed above, this is equal to
$\sum_{j=1}^{m(x)} ( V(B_j(x))-V(B_{j-1}(x)) ) \P( Y \notin B_{j-1}(x) | X = x)$.

The final criterion, \emph{once averaged over $x$}, is $\E\big[{\rm area}^+(A,X) \big]$. Note that from a distribution given through i.i.d. samples $(x_1,y_1,\dots,x_n,y_n)$ (e.g., some test data), it can be estimated as 
\BEQ
\label{eq:estdata}
\frac{1}{n} \sum_{i=1}^n \sum_{j=1}^{m(x_i)} ( V(B_j(x_i))-V(B_{j-1}(x_i)) ) 1_{ y_i \notin B_{j-1}(x_i) }.
\EEQ
For values of $s$ that are not in the set $\{ s_A(\nu,x), \ \nu \in \rb\}$, this choice corresponds to outputting a single subset and is adapted to be conservative in terms of size (and is thus adapted when outputting a set of a given size or less).

\item \textbf{Lower piecewise constant interpolation}  $(s_A^-(\nu,x),\alpha_A^-(\nu,x))_{\nu \in \rb}$. This defines ${\rm area}^-(A,x)$. When $A(\cdot,x)$ takes $m$ distinct values, this is equal to
$$\sum_{j=1}^{m(x)} ( V(B_j(x))-V(B_{j-1}(x)) )     \P( Y \notin B_{j}(x) | X = x). $$ 
The final criterion is $\E\big[{\rm area}^-(A,X) \big]$, with an estimation formula given data similar to \eq{estdata}.

This choice corresponds to outputting a single subset and is adapted to be conservative in terms of coverage (and is thus adapted when outputting a set of a given miscoverage or less).

\item \textbf{Piecewise affine interpolation} $(\bar{s}_A(\nu,x),\bar{\alpha}_A(\nu,x))_{\nu \in \rb}$: if randomized predictions are allowed, then the piecewise affine interpolant of the curve can be achieved (in the example above, by predicting $B_j(x)$ with probability $q$ and $B_{j+1}(x)$ with probability $1-q$ for any $q \in [0,1]$).  This defines ${\rm area}(A,x)
= \frac{1}{2}{\rm area}^+(A,x) + \frac{1}{2}{\rm area}^-(A,x)$ for a given $x \in \X$.
This is equal to
$$\sum_{j=1}^{m(x)} ( V(B_j(x))-V(B_{j-1}(x)) ) \cdot \frac{1}{2}( \P( Y \notin B_{j-1}(x) | X = x) + \P( Y \notin B_{j}(x) | X = x) )$$ when $g$ takes $m(x)$ distinct values.
The final criterion is $\E\big[ {\rm area}(A,X) \big]$, with an estimation formula given data similar to \eq{estdata}.

\item \textbf{Convex envelope} 
$(\bar{s}^{\rm convex}_A(\nu,x),\bar{\alpha}^{\rm convex}_A(\nu,x))_{\nu \in \rb}$, that is, the largest convex function which is below all the curve: it can also be achieved by randomized predictions, and can be computed using isotonic regression~\cite{orderrestricted}. This leads to the criterion $\E\big[ {\rm area}^{\rm convex}(A,X) \big]$, which cannot be estimated from finite data as it requires having access to conditional probabilities to compute the convex hull. 
\EIT

The following proposition shows that two of these criteria based on areas are minimized at the same point as the one based on scalarization of the multi-objective optimization of both volume and coverage (which the Lagrangian formulation aims to do).

\begin{proposition}
\label{prop:optseq}
    The criteria $\E\big[ {\rm area}(A,X) \big]$ and $\E\big[ {\rm area}^{\rm convex}(A,X) \big]$ are minimized at the function $A$ defined as $A(\lambda,x) = \{ g(x,\cdot) \geqslant -\lambda\}$, where $g$ is the minimizer of the loss function $\ell$ defined in \eq{loss}. 
\end{proposition}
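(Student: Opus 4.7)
The plan is to argue pointwise in $x$ and exploit the two-dimensional geometry of the planar point cloud
$\mathcal{S}_x = \{(V(B),\P(Y \notin B|X=x)) : B \subseteq \Y\} \subset [0,V(\Y)]\times[0,1]$.
A non-decreasing $A(\cdot,x)$ with distinct values $B_0(x) \subsetneq B_1(x) \subsetneq \cdots \subsetneq B_{m(x)}(x)$ traces a finite subset of $\mathcal{S}_x$, and ${\rm area}(A,x)$ (resp.\ ${\rm area}^{\rm convex}(A,x)$) is the area under the piecewise affine interpolation (resp.\ lower convex envelope) of that subset.

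The first step is to lower-bound both quantities by the area $\Omega(x)$ under the lower convex envelope of the full cloud $\mathcal{S}_x$. For ${\rm area}^{\rm convex}(A,x)$ this is immediate because the convex envelope of any subset of $\mathcal{S}_x$ lies pointwise above that of the full set. For ${\rm area}(A,x)$ the piecewise affine interpolation of $A$'s points dominates their own convex envelope, which in turn dominates that of $\mathcal{S}_x$. Equality in the ${\rm area}^{\rm convex}$ bound holds iff the traced points include every vertex of the lower convex envelope of $\mathcal{S}_x$; equality in the ${\rm area}$ bound additionally requires that the traced points be \emph{exactly} those vertices, so that the affine interpolant coincides with the envelope itself.

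Next, I use the standard characterization of the vertices of the lower convex envelope: a subset $B \subseteq \Y$ is such a vertex iff $B \in \argmin_{B' \subseteq \Y}\{V(B') + \lambda\,\P(Y \notin B'|X=x)\}$ for some $\lambda \geq 0$, since rotating the supporting hyperplanes $s + \lambda\alpha = \mathrm{const}$ as $\lambda$ sweeps $[0,\infty)$ picks out precisely the extreme points of the lower convex hull in order.

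Finally, I invoke Proposition~\ref{prop:main}. When $g$ minimizes $\E[\ell(Y,g(X,\cdot))]$, the left-hand side vanishes and the integrand on the right-hand side is the non-negative suboptimality gap of the Lagrangian problem at $(\lambda,x)$; hence it is zero for almost every $(\lambda,x)$. Thus for every $x$ in the support of $X$ and almost every $\lambda \in \rb$, the level set $A^*(\lambda,x) = \{g(x,\cdot) \geqslant -\lambda\}$ is a Lagrangian minimizer. Since each vertex of the envelope of $\mathcal{S}_x$ is the unique Lagrangian minimizer on an open interval of $\lambda$, the non-decreasing chain $\{A^*(\lambda,x) : \lambda \in \rb\}$ is exactly the collection of envelope vertices, saturating both lower bounds; taking expectations over $X$ concludes the proof. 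The main subtlety is reconciling the almost-everywhere statement of Proposition~\ref{prop:main} with the claim that $A^*$ traces \emph{every} vertex and \emph{no} non-vertex point: this works because the critical $\lambda$-values (slope changes of the envelope) form a measure-zero set, while the minimizer is unique on each intervening open interval, forcing the full chain of vertices to appear as level sets. The argument is transparent when $\mathcal{S}_x$ is finite and extends via the Choquet/\lova machinery of \mysec{lova}.
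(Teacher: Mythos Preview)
Your proof is correct and follows the same approach as the paper: invoke Proposition~\ref{prop:main} to see that the level sets of the optimal $g$ realize the Lagrangian minimizers for every $\lambda$, observe that these trace out exactly the lower convex envelope of $\mathcal{S}_x$, and conclude that both area criteria are saturated. The paper's own proof is a one-sentence sketch of this (``from Prop.~\ref{prop:main}, the optimal randomized predictions are obtained from the optimal function $g$, and the associated curve is convex, hence the result''), whereas you spell out the planar geometry and the almost-everywhere reconciliation explicitly; one minor overstatement is that equality in the ${\rm area}$ bound does not require the traced points to be \emph{exactly} the vertices---points lying on envelope edges are also admissible---but this affects only the necessity direction, not the sufficiency you need.
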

\begin{proof}
For a submodular criterion $V$, for all $x \in \X$, from Prop.~\ref{prop:main}, the optimal randomized predictions are obtained from the optimal function $g$, and the associated curve is convex, hence the result.
\end{proof}
The criterion $\E\big[ {\rm area}(A,X) \big]$ can be estimated from data and thus serves as a natural loss function for non-decreasing functions, with an intuitive alternative formulation which we describe next. Note that when applied to set functions obtained from level sets, it is not convex (and in fact not continuous if level sets are considered), and thus cannot be easily used in estimation procedures. 

\paragraph{Alternative formulation.} Through discrete integration by parts for a finite sequence of sets, we have:
\BEAS
{\rm area}^-(A,x)
& = &  \sum_{j=1}^{m(x)} ( V(B_j(x))-V(B_{j-1}(x)) ) \P( Y \notin B_{j-1}(x) | X = x) \\
& = &  \sum_{j=1}^{m(x)}  V(B_j(x))  \P( Y \in B_{j}(x) \backslash B_{j-1}(x) | X = x)
,\EEAS
which is equal to the expected size of the smallest set containing $Y$, which is a common criterion in information retrieval~\cite{manning2008introduction}, here used on a single observation.

In the general case, we can define
\BEAS
v_A^+(x,y) & = &  \sup \big\{ V(A(\nu,x)), \ \nu \in \rb, \ y \notin A(\nu,x) \big\} \\
v_A^-(x,y) & = &  \inf \big\{ V(A(\nu,x)), \ \nu \in \rb, \ y \in A(\nu,x) \big\}, \\
v_A(x,y) & = & \frac{1}{2} v_A^+(x,y)  + \frac{1}{2} v_A^-(x,y) ,
\EEAS
where $v_A(x,y)$ corresponds to the performance of randomized prediction. We then have $\E\big[ {\rm area}^{\pm}(A,X) \big] 
= \E\big[ v_A^{\pm}(X,Y) \big]$, and 
$\E\big[ {\rm area}(A,X) \big] 
= \E\big[ v_A(X,Y) \big]$.

In our experiments in \mysec{experiments}, we compute these ``area-losses'' to compare several estimators.

\paragraph{Averaged curves.}
Given the curves in \myfig{areas} for each $x \in \X$ (which require the knowledge of conditional probabilities to be drawn), an aggregated curve can be obtained so that the area under the curve corresponds to $\E\big[ {\rm area}(A,X) \big]$ or $\E\big[ {\rm area}^\pm(A,X) \big] $ (these can be computed \emph{without} access to the conditional distribution). In order to aggregate several curves corresponding to several potential $x$, a common parameter has to be chosen; beyond $\nu$, the only one that can be computed is the size $V(A(\nu,x))$, and thus, for each $s \in [0,V(\Y)]$, we can define $\nu(s,x)$ such that $V(A(\nu(s,x),x))=s$ (this is only possible for randomized predictions where all values of $s$ can be achieved in expectation) and it is possible to average all corresponding $\alpha$-values over all $x$, to obtain a curve where for each point $(s,\alpha)$ on the curve, we have a fixed size for every $s$ (and every $x$), and on average (over $x$) the miscoverage $\alpha$ (that is, a marginal coverage). 

 Note that these curves can be used to assess coverage for a given~$\alpha$ only for marginal coverage, but the area of the affine interpolation is exactly $\E\big[ {\rm area}(A,X) \big]$, and thus can be used to evaluate conditional coverage. Its convex envelope can be computed, which is also an aggregated measure of conditional coverage, and can be used to assess marginal coverage with randomized predictions. 
 
In \mysec{conformal}, we will show how, for our loss function, we can estimate a parameterization by the miscoverage level $\alpha \in (0,1)$.

\section{Obtaining high-probability sets and conformalization}
\label{sec:conformal}
Given a candidate function $g:\X \times \Y \to \rb$ that approximately minimizes the loss function defined in \eq{loss}, we can generate several outputs that can be used in different setups. For all of them, we can choose either deterministic predictions or randomized predictions (with two possible sets). All will correspond to a certain curve in the $(s,\alpha)$-plane for each given $x$, and a corresponding averaged curve with different semantics. We already described in \mysec{otherlosses} how the trade-off parameter $\lambda$ or the size $s$ could be used for aggregation. We now show how the miscoverage $\alpha$ can be used, which can only be done approximately through an explicit estimation of the conditional probability.

\paragraph{Estimate of conditional probability.}
This section is based on the fact that after learning, $g(x,\cdot)$ is an approximate minimizer of $v(g(x,:)) + \frac{1}{2}\E[ g(x,Y)^2|X=x ]$. If all values of $g(x,\cdot)$ are distinct (which is to be expected if enough randomness is present in our estimation procedure), a unique subgradient of~$v$ is a positive additive measure $\mu(\cdot|x)$ such that $\mu(\Y|x) = V(\Y)$ and $\mu(A|x) \leqslant V(A)$ for all $A \subseteq \Y$. The optimality condition is then
$$
\forall y \in \Y, \ 
d\mu(y|x) + g(x,y) dp(y|X=x) = 0,
$$
leading to an estimate $d \hat{p}(y|X=x) = - d\mu(y|x) / g(x,y)_-$, which is a non-negative measure which we normalize to one (by dividing by its sum). 
Given this estimate $\hat{p}$, we can solve exactly the following problem to get all (estimated) prediction sets at all coverage levels
\BEQ
\label{eq:AF}
\inf_{f: \Y \to \rb} v(f) + \frac{1}{2} \int_\Y f(y)^2 d \hat{p}(y|X=x) .
\EEQ
Indeed, denoting $\hat{f}(x,\cdot)$ the minimizer of \eq{AF}, owing to Lemma~\ref{lemma:calibration}, the sets $\{ \hat{f}(x,\cdot) \geqslant -\lambda\}$ are optimal for the minimization of $V(B) + \lambda \hat{\P}(B^{\sf c}|X=x)$.
Note that it turns out that a constant times $g(x,\cdot)_-$ is such a minimizer, so no extra problem has to be solved and no extra clustering of values is to be expected.

\paragraph{Thresholds for fixed conditional coverage.}

If a fixed conditional coverage level $\alpha$ is desired, we now provide an algorithm to compute $\lambda(\alpha,x)$ and the associated set $\{ \hat{f}(x,\cdot) \geqslant -\lambda(\alpha,x)\}$ for deterministic prediction, as well as $\lambda_\pm(\alpha,x)$ for randomized predictions, such that the set
$\{ \hat{f}(x,\cdot)  \geqslant -\lambda(\alpha,x)\}$ (and its randomized counterpart) 
provides a good estimate of the minimizer of $V(B)$ such that $\P( Y \notin B | X= x) \leqslant \alpha$ (note that this minimization problem only has a fractional solution that can be obtained by randomization). Our procedure is exact with the optimal prediction function that minimizes the loss in \eq{loss}, and only approximate otherwise.

The thresholds are obtained directly from \myfig{areas}, by looking at a fixed horizontal level $\alpha$, which has to lie in an interval between the coverage levels of sets corresponding to two values of $\lambda$, which we denote $\lambda_+(\alpha,x)$ and $\lambda_-(\alpha,x)$. For deterministic predictions, we simply select $\lambda(\alpha,x)$ as the largest of the two (which leads to the larger set). For randomized predictions, the probability associated with each of the two values of $\lambda$ is selected in $[0,1]$ to precisely match the expected coverage $\alpha$.

Note that since $\hat{g}$ is obtained by minimizing $\E[ v(g(X,\cdot) + \frac{1}{2} g(X,Y)^2]$ and the optimal $g^\ast$ is non-positive, by strong convexity, an excess risk of $\varepsilon$ is the optimization of $\hat{g}$ leads to a bound
$\frac{1}{2} \E[ (\hat{g}(X,Y)_- -  g^\ast(X,Y) )^2 ] \leqslant \varepsilon$, which could in turn lead to approximation guarantees if extra smoothness assumptions are made on $\hat{g}$ and~$g^\ast$.

 \paragraph{Conformalization.}

Given a candidate $\lambda^\ast(\alpha,x)$, we could ``conformalize'' it using standard conformal prediction with the scores $\hat{f}(x,y) + \lambda^\ast(\alpha,x)$ or $\hat{f}(x,y) / \lambda^\ast(\alpha,x)$ using split conformal prediction~\cite{angelopoulos2021gentle}, to at least get certified marginal coverage. This would lead to the estimation of a threshold $\hat{q}_\alpha$ such that $\P( \hat{f}(X,Y) + \lambda^\ast(\alpha,X) \geqslant \hat{q}_\alpha)$ or 
$\P( \hat{f}(X,Y)  \geqslant \hat{q}_\alpha \lambda^\ast(\alpha,x) )$ are exactly in an interval $[1-\alpha,1-\alpha +1/(n+1)$, where $n$ is the size of the calibration set.

Our setup also applies to any form of density estimation: given any estimate of $p(y|X=x)$, we can minimize for any $x$, $v(f) + \frac{1}{2} \int_\Y f(y)^2 d\hat{p}(y|X=x)$ and then obtain the desired sets as done above.

\section{Optimization algorithms}
\label{sec:algorithms}
\label{sec:guarantees}

In this section, we first describe two sets of optimization algorithms, one dedicated to large-scale potentially non-linearly parameterizable predictors (such as neural networks) based on stochastic gradient descent, and one dedicated to linearly parameterizable predictors (such as using positive definite kernel methods) based on an iterative reweighted least-squares formulation~\cite{daubechies2010iteratively,bach2012optimization}, so that we can obtain precise solutions. We only consider the latter in our experiments in \mysec{experiments}. Theoretical guarantees based on standard learning theory and optimization guarantees~\cite{bach2024learning} could also be obtained.

\paragraph{Additional regularization.} 
In both cases, we add two modifications for better empirical behavior:
\BIT
\item \textbf{Label smoothing}: We can add the penalty $\varepsilon \int_\Y g(x,z)^2 dM(z)$ for some positive measure $M$ to make the loss strongly convex in $g$, which stabilizes the estimation (we select $\varepsilon=10^{-2}$ throughout experiments). All of our examples have a natural such measure such that $V \geqslant M$ and $V(\Y) = M(\Y)$. When this is applied, then the conformalization procedures from \mysec{conformal} have to be adapted since now the optimal $g(x,\cdot)$ depends on $\P(\cdot|X=x) + \varepsilon M$ (it is thus a form of label smoothing), and this implies a new estimate of the conditional probability (that has to be projected on the set of probability measures).

\item \textbf{Post-clustering}: Because of the submodular penalty, optimal prediction functions typically have clustered values, which usual parameterized models cannot exactly enforce. Such clustered behaviors can be favored by decomposing $V$ as $V-W+W$, where both $V-W$ and $W$ are submodular, and $W$ is non-decreasing. We can then minimize at training time the loss $\E[ w(h(X,\cdot)) + \frac{1}{2}  h(X,Y)^2]$, and at testing time, given $\hat{h}(x,\cdot)$, to obtain the final estimate $g(x,\cdot)$, minimize $v(f) + \frac{1}{2} \int_\Y \frac{f(y)^2}{-\hat{h}(x,y)_-} d\mu(y|x)$, which is an extension of total variation regression (see~\cite{bach2011shaping} and references therein), that leads to clustered values and thus to enhanced estimation for randomized predictions.

A natural possibility for $W$ is $W=M$ modular measure that $V$ dominates, which leads to a particularly simple training procedure (quadratic loss) and keeps all non-linear clustering behaviors for testing. This, however, leads to learning $h$ instead of $g$, which is more than desired (that is, it cannot leverage the fact that $g$ has clustered values, while $h$ does not). To mitigate this effect, we can take $W = \varepsilon M + ( 1- \varepsilon) V$, for $\varepsilon$ small (this will cluster values of $h$ that are close enough).

\EIT

\paragraph{Oracles for size function $V$.}  In our examples for different sets $\Y$, we will need the following oracles for the function $V$ and its Choquet integral / \lova extension $v$:
\BIT
\item Subgradient of $v(f)$: given $f$, the subgradient of $v$ is $V(\Y)$ times a probability measure (an element of the core of $V$, as defined in \mysec{lova}). To minimize our loss function using stochastic gradient descent, we only need a sample from this measure, whereas we need the full measure to obtain sets with fixed conditional coverage in \mysec{conformal}. This is possible for all our examples in \mysec{examples}.

\item Minimization of $V$ plus a modular function (needed to obtain optimal solutions given conditional probabilities): A classical result from submodular analysis shows that this is equivalent to being able to minimize $v$ plus a convex separable function (see \mysec{lova} and \cite[Chapter 9]{bach2013learning}). This oracle is needed for the two-step procedure described just above that leads to additional clustered values, and for the procedure to obtain conditional coverage outlined in \mysec{conformal}.

\item Reweighted quadratic formulation: $v$ is rewritten as $v(f) = \inf_{ \eta \geqslant 0} \frac{1}{2} \int_\Y \frac{f(y)^2}{\eta(y)} dM(y) + \frac{1}{2} \gamma(\eta)$, and can be smoothed using a minimization with respect to $\eta \geqslant \varepsilon$. This is what we focus on in our simulations in \mysec{experiments}.

\EIT

\subsection{Stochastic gradient descent (SGD)}
The loss function is convex and subdifferentiable, and hence we can apply any classical optimization algorithm, such as SGD, with the usual guarantees. In our context, we need to learn a function $g: \X \times \Y \to \rb$, which we can parameterize arbitrarily. The function $v(g(x,\cdot))$ is, however, hard to compute, but in all our examples, an unbiased estimate of a subgradient can be obtained easily. This allows running optimization algorithms for any prediction models, not necessarily linearly-parameterized (such as neural networks)

\subsection{Iteratively-reweighted least-squares algorithms}
Given that one part of the loss function is quadratic (the part $\frac{1}{2} g(x,y)^2$), we can use only quadratic optimization (by solving linear systems), if we can treat the non-differentiable part appropriately, using reweighted-least-squares formulations~\cite{daubechies2010iteratively,bach2012optimization}.

We consider kernel methods as predictors to focus on the differences in loss functions without the need to worry about optimization issues (as this leads to prediction functions that are linear in their parameters).

\paragraph{Predictors.}
Kernel methods are used with incomplete Cholesky decomposition~\cite{fine2001efficient,bach2005predictive}. We consider a positive definite kernel such as $k(x,y) = \exp(-\alpha\|x-y\|_2)$ or $k(x,y) = ( 1 + \alpha x^\top y)^r  $, or the conditional positive kernel~\cite{wendland2004scattered} $k(x,y) = -\| x-y\|_2$ (where we assume that we know a constant $s$ such that $K + s 1_n1_n^\top$ is positive definite, this can be any $s>-1/1_n^\top K^\dagger 1_n$, or, according to~\cite{bach2023relationship}, $s= \frac{2 R}{\sqrt{\pi}} \sqrt{d/2} \geqslant \frac{2 R}{\sqrt{\pi}} \frac{\Gamma((d+1)/2)}{\Gamma(d/2)} $, where $R$ is the radius of an $\ell_2$-ball containing the data). For a positive definite kernel, we take $s=0$.

We then create an empirical feature map by selecting greedily (see~\cite{fine2001efficient,bach2005predictive}) a set $I \subset \{1,\dots,n\}$, computing a Cholesky decomposition of $K_{II} = GG^\top$ (the submatrix of $K$ with columns and rows index by $I$), and considering the feature map $\varphi(x) = G^{-\top} (k(x,x_i))_{i \in I} $, this leads, on the training data, to an approximation of the kernel matrix $\hat{K} = K_{\cdot I} (GG^\top)^{-1} K_{\cdot I} $, for which we have $\hat{K}_{II} = K_{II}$ (in our experiments, we chose to select $I$ such that $\| K - \hat{K} \|_\ast \leqslant \varepsilon \| K \|_\ast = \varepsilon \tr(K)$, with $\varepsilon = 10^{-3}$, where $\| \cdot \|_\ast$ is the nuclear norm). We then consider a predictor of the form $f(x) = \theta^\top \varphi(x) + \beta $, with penalty $\| \theta\|_2^2$.

\paragraph{Discrete outputs.}

We consider $\Y = \{1,\dots,k\}$, and for  $V(A) = \frac{1}{k}|A|$, we parameterize $g: \X \to \rb^k$. Given $n$ observations $(x_i,y_i) \in \X \times \Y$, $i=1,\dots,n$, and $A_i = \{ i \in \{1,\dots,n\} ,\ y_i = j\}$, this leads to an objective function where each label can be treated independently:
$$
\sum_{j=1}^k  \Big\{
\frac{1}{kn} \sum_{i=1}^n g_j(x_i)
+ \frac{1}{2n} \sum_{i \in A_j} g_j(x_i)^2  + \frac{\lambda}{2} \| g_j\|^2 \Big\},$$
where $\| g_j\|^2$ is the penalty described above (that is, $\| g_j\|^2 = \| \theta_j\|_2^2$, if $g_j=\theta_j^\top \varphi(\cdot) + \beta_j$).
As mentioned earlier, we add an extra penalty $\frac{\varepsilon}{k} \frac{1}{2n} \sum_{i =1}^n g_j(x_i)^2 $ for stability, with $\varepsilon = 10^{-2}$. If $d$ is a dimension of the feature space (i.e., $d$ is the number of used columns in the approximation of the kernel matrix), then the overall complexity is $O(k d^2n)$ per iteration, when solving the linear systems by Gaussian elimination, and $O(k dn)$ if using conjugate gradient.

For more generic functions of cardinality, in order to perform optimization, we can  use the following representation, for the sum of $r$ largest elements of $g(x)$, with $g_{\sigma(1)}(x) \geqslant \cdots \geqslant g_{\sigma(k)}(x)$:
\BEAS
\sum_{i=1}^r g_{\sigma(i)}(x)
& = & \min_{ t_r \in \rb} r t_r + \sum_{i=1}^k ( g_i(x) - t_r)_+= \min_{ t_r \in \rb} r t_r + \frac{1}{2}\sum_{i=1}^k ( g_i(x)  - t_r) + \frac{1}{2}\sum_{i=1}^k |g_i(x) -t_r|
\\
& = & \min_{ t_r \in \rb, \eta_r \in \rb_+^k} \frac{1}{2} g(x)^\top 1_k + ( r - \frac{k}{2} ) t_r + \frac{1}{2} \sum_{i=1}^k \frac{(g_i(x) -t_r)^2}{2\eta_{ri}}  
 + \frac{1}{2} \sum_{i=1}^k \frac{\eta_{ri}}{2},
\EEAS
with an optimal $t_r$ between $g_{\sigma(r+1)}(x) $ and $g_{\sigma(r)}(x)$, and $\eta_{ri} = |g_i(x)-t_r|$. The function above is jointly convex in $g(x),\eta_r, t_r$, and thus we can use it within an alternate ``reweighted quadratic'' optimization framework, by alternating between finding $\eta, t$ (in closed form) and optimizing with respect to $g$ (with a linear system). For this, following~\cite[Section 5]{bach2012optimization}, it is preferable to avoid that $\eta_{ri}$ is too small. This can be obtained by adding a constraint that $\eta_{ri} \geqslant \varepsilon$ (all $\eta_{ri}$ can then be obtained in closed form and the $t_r$ using binary search). The overall complexity is then a constant times the one for modular penalties.

\paragraph{Regression.}

For simplicity of implementation, we will consider a partition $C_1\cup \cdots \cup C_k = \Y$, and discrete predictions in one of these cells leading to an output set $\bar{\Y} = \{1,\dots,k\}$, with the underlying assumption that the cells are small. This allows to learn $k$ different functions $g_1,\dots,g_k$ (like in classification), but with an extra Laplacian penalty
$$
\sum_{i,j=1}^k w_{ij} ( g_i(x) - g_j(x) )^2
$$
to enforce smoothness across cells (we simply use weights that approximate when $\Y \subset \rb$, the squared $L_2$-norm of the derivatives in $y$). Then, the Choquet integral is equal to
$$v(\bar{f})
= \sum_{i=1}^k w_i \max_{ j \in C_j} \bar{f}_j,
$$
which is a sum of max functions where $C_j \subset \bar{\Y}$ and $w_j \in \rb_+$. This can be solved using iterated least-squares using the reweighted least-squares formulation above (corresponding to $r=1$):
\BEAS
v(\bar{f})
& = &  \sum_{i=1}^k w_i \max_{ j \in C_j} \bar{f}_j = \inf_{t_i \in \rb} \sum_{i=1}^k w_i \Big[ t_i + \sum_{ j \in C_j} (\bar{f}_j  - t_i)_+ \Big]\\& = & \inf_{t_i \in \rb} \sum_{i=1}^k w_i \Big[ t_i + \frac{1}{2}\sum_{ j \in C_j} (\bar{f}_j  - t_i) 
+ \frac{1}{4}\sum_{ j \in C_j} \frac{(\bar{f}_j  - t_i) ^2}{\eta_{ij}}+ \frac{1}{4}\sum_{ j \in C_j}  \eta_{ij} \Big].
\EEAS
We can then use preconditioned conjugate gradient algorithms~\cite{golub2013matrix} to solve linear systems, for which we can obtain a complexity in $O(k d n+ k^2 n)$ per iteration.

\section{Experiments}
\label{sec:experiments}

We now provide illustrations of our new loss functions.\footnote{Matlab code to reproduce all experiments can be downloaded from \url{www.di.ens.fr/~fbach/
submodular_conformal.zip}.}

\subsection{Discrete ouputs}
Here, we compare three different loss functions with kernel methods: the classical square loss (by solving the associated linear system), the multinomial loss from softmax regression (using the SAGA~\cite{defazio2014saga} algorithm to solve the optimization problem), and the new quadratic loss function (by solving the associated linear system or through an iterative least-squares algorithm).
Our aim is to understand the following phenomena:
\BIT
\item \textbf{Choice of non-parametric kernel:} In \myfig{kernelchoice}, we consider three types of non-parametric kernels, the exponential kernel without an unregularized constant term, the exponential kernel with an unregularized constant term, and the spline kernel, which behave increasingly better at extrapolation. From now on, we only consider the spline kernel when dealing with non-parametric estimation.

\begin{figure}[h]
   \centering
     \includegraphics[width=0.7\linewidth]{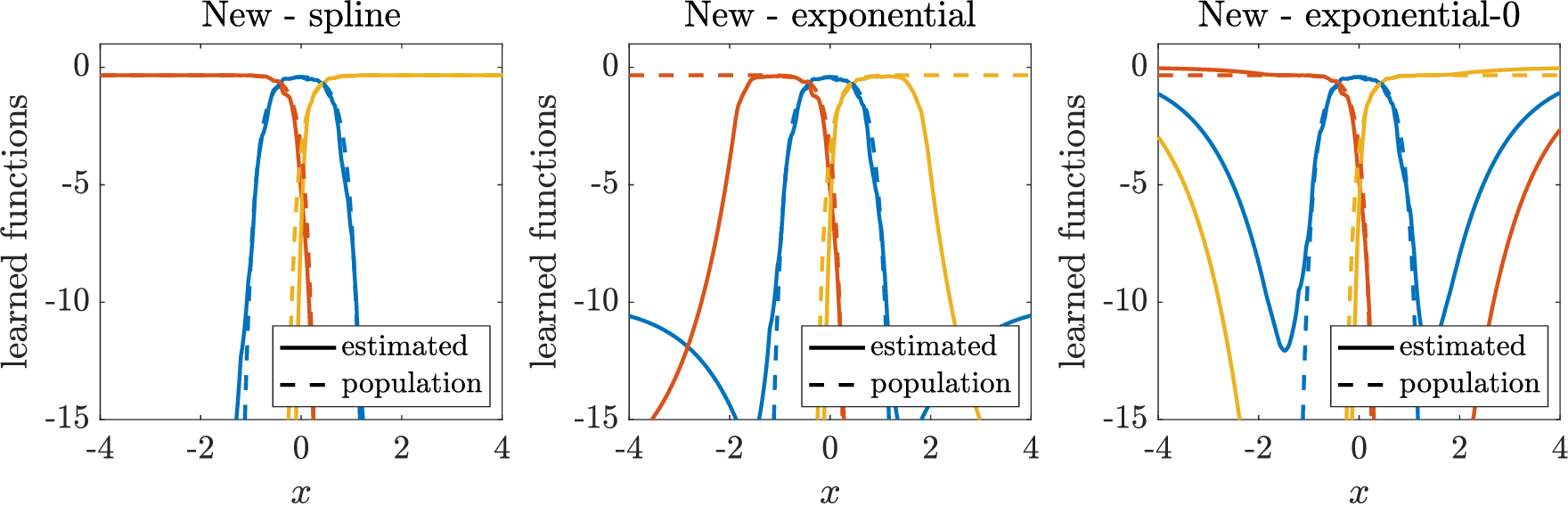}
     
     \vspace*{-.25cm}

     \caption{Comparing kernels on Gaussian class-conditional data in one dimension for classification with $k=3$ classes on a one-dimensional problem, with the new quadratic loss function corresponding to $V(A) = |A|/k$. From left to right: spline, exponential with an unregularized constant term, exponential without an unregularized constant term. \label{fig:kernelchoice}
     }
\end{figure}

\item \textbf{Impact of kernel choice on learned functions:} In \myfig{kernelimpact}, we see that (1) linear and quadratic kernels lead to underfitting, (2) with linear kernels, the classical quadratic loss is subject to the ``masking problem'' where a class is masked by others~\cite[Section 2.4]{hastie2009elements} while the new one is not, a problem that does not occur anymore for the quadratic kernel, (3) the softmax loss is best because here is well-specified (that is, the prediction function is linear for the softmax loss). This is not the case when using mixtures of Gaussians in later experiments.

\begin{figure}[h]
   \centering
   
   \vspace*{.25cm}
   
     \includegraphics[width=0.7\linewidth]{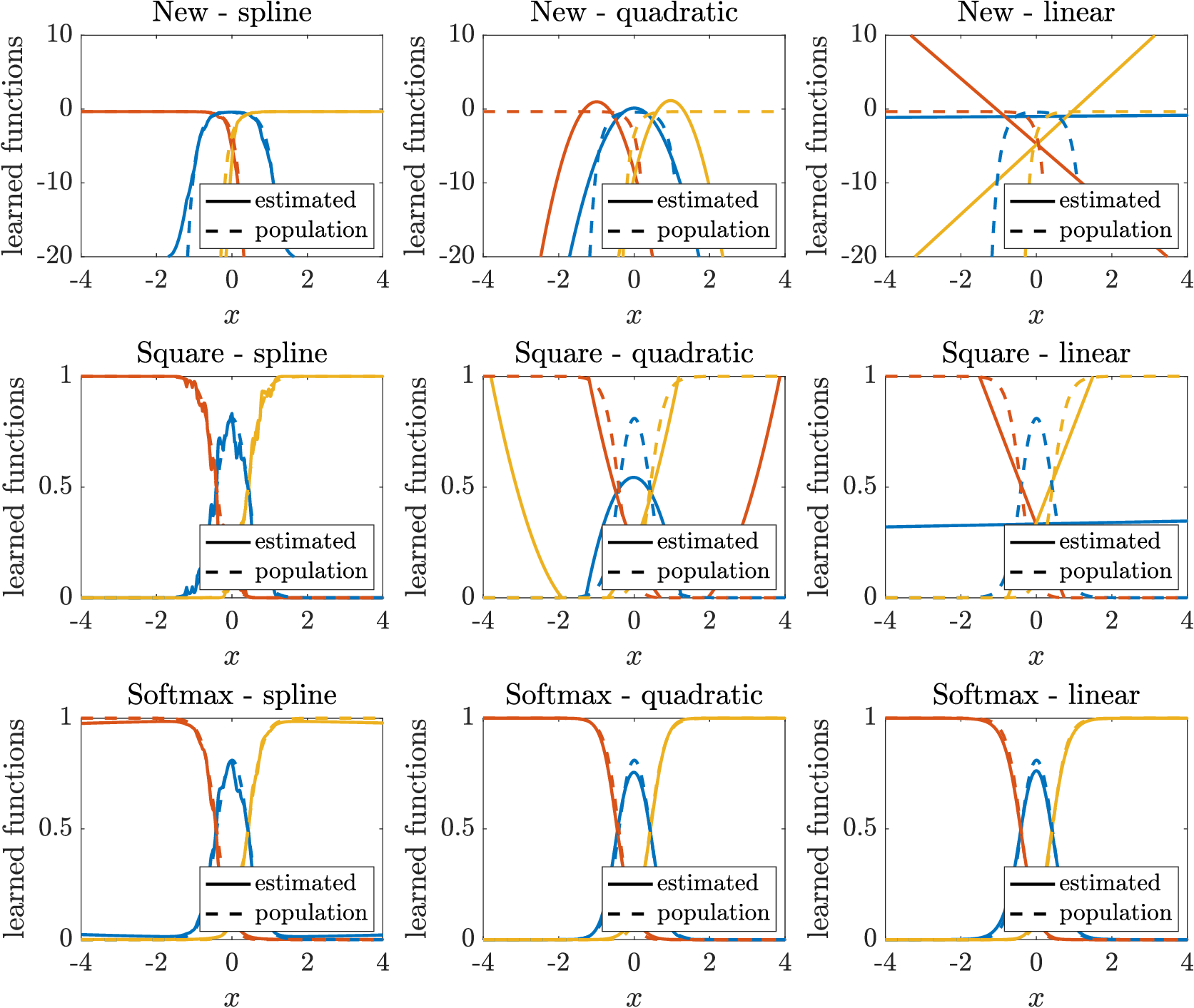}

   \vspace*{-.25cm}
   
   \caption{Comparing loss functions and kernels on Gaussian class-conditional data in one dimension for $k=3$ classes. Top: new loss function for $V(A) = |A|/k$, middle: regular quadratic function, bottom: softmax loss. From left to right: spline, quadratic, linear kernel.  
        \label{fig:kernelimpact}}
\end{figure}

\item \textbf{Comparisons of area losses:} In \myfig{ranksdis}, we compare the area loss function (for $V(A)=|A|$) on a four-dimensional problem with mixture of Gaussians class conditional data. We see that the softmax loss is better than the new square loss, which is better than the regular square loss for misspecified problems (low-rank kernels), and not for non-parametric modeling. We also compared the two versions of our method: regularized (with a quadratic penalty on all values of $g$, which corresponds to label smoothing, as described in \mysec{algorithms}) and unregularized, showing the benefits of label smoothing.

 In \myfig{ranksdisconcave}, we consider our new loss functions with the concave penalty $V(A) = \log( 1 + |A|)$, showing the benefits of using the concave penalty explicitly when learning, in particular for underparameterized models.  Moreover, we see how randomized predictions lead to smaller loss values.

\begin{figure}[h]
   \centering
     \includegraphics[width=0.7\linewidth]{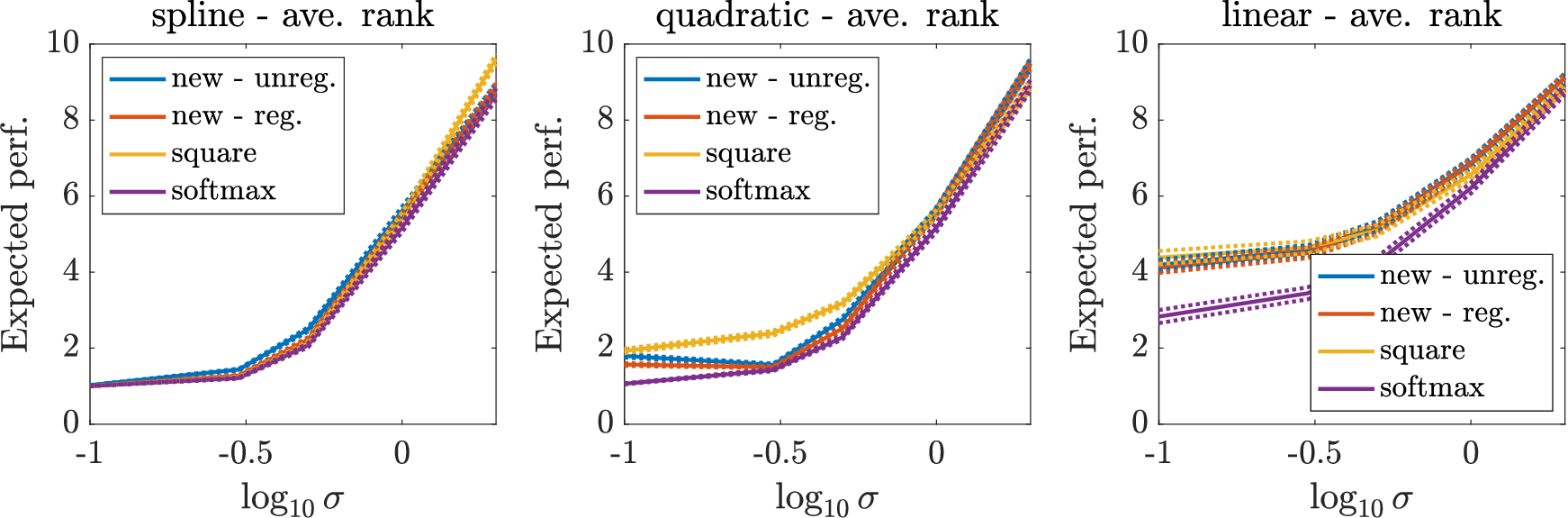}
     
     \vspace*{-.25cm}

     \caption{Comparing loss functions for $k=24$ in dimension $4$ and mixture of two Gaussians of variance $\sigma^2$ conditional data, averaged over ten replications (with error bars). When $\sigma$ is small, the prediction problem is easier (hence smaller losses), while it is harder for larger $\sigma$. We plot area-loss performances as a function of $\sigma$.
     Our new cost function is considered with $V(A)=|A|$, and with (``reg.'')  or without  (``unreg.'')extra label smoothing regularization. Left: spline kernel, middle: quadratic kernel, right:  linear kernel.   
     \label{fig:ranksdis}}
\end{figure}
 
 \begin{figure}[h]
   \centering
     \includegraphics[width=0.7\linewidth]{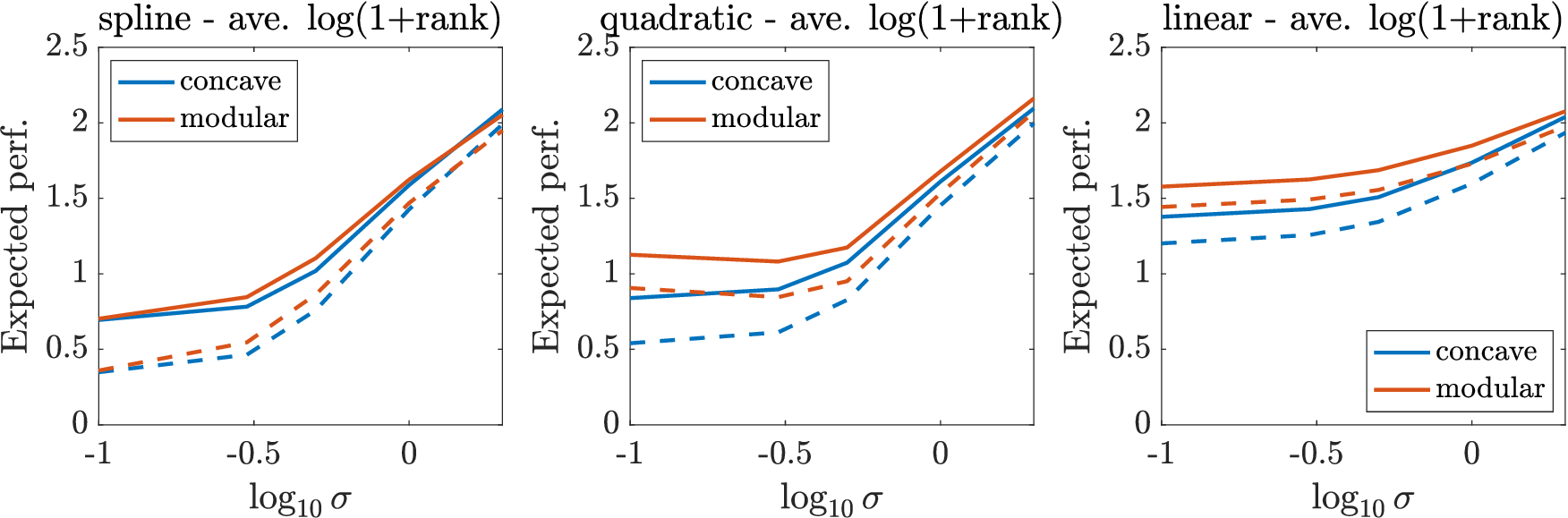}
   
     \vspace*{-.25cm}
     
      \caption{Comparing area loss functions for $k=24$ and mixture of two Gaussians conditional data in the same setting as \myfig{ranksdis}; comparing concave penalties and regular penalties. Left: spline  kernel, middle: quadratic  kernel, right:  linear kernel.  In each plot, we compare the new loss function, which is used with the modular function $V(A) = |A| \log(1+k) /k$, which is close to the one plotted in \myfig{ranksdis}, while we consider the concave function $V(A) = \log(1+|A|)$.   In plain performance for the deterministic prediction is shown, while randomized predictions are shown in dashed. 
     \label{fig:ranksdisconcave}}
\end{figure}

\item \textbf{Comparisons of conditional coverage guarantees:}
For a learned function $g$ for a cardinality problem with the spline kernel, for a ``fixed $\alpha$'' curve, we can check the conditional coverage property for each $x$ in \myfig{coverage}, where we see that methods relying on optimizing marginal coverage do not lead to good conditional coverage. Moreover, the optimal prediction function leads to perfect conditional coverage, while post-clustering slightly improves the performance of the learned prediction function.

\begin{figure}[h]
   \centering
     \includegraphics[width=0.95\linewidth]{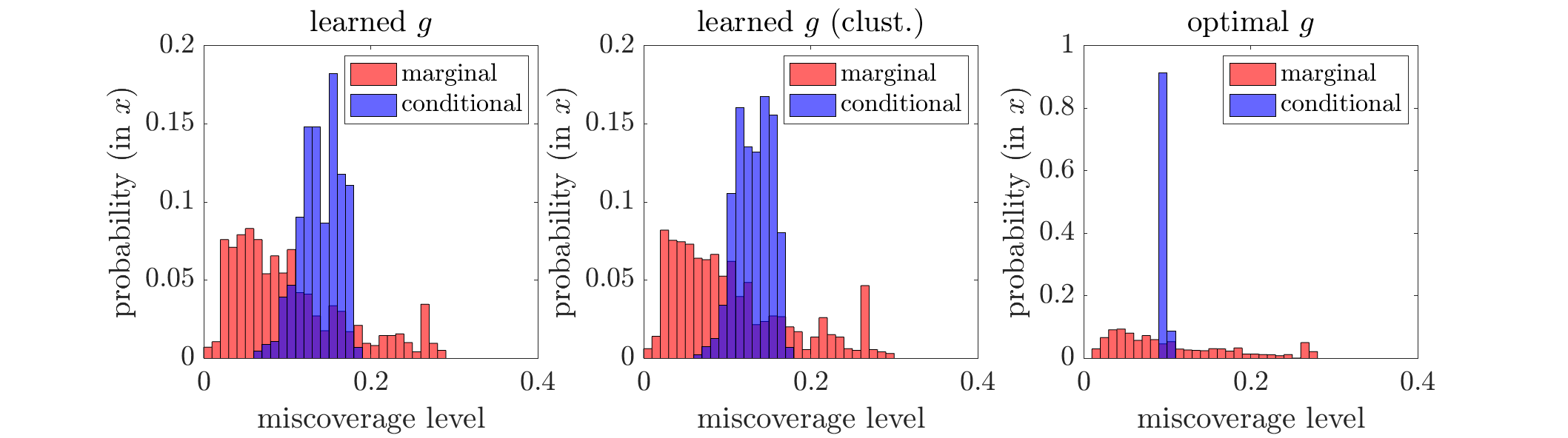}
     
     \vspace*{-.2cm}

     \caption{Comparing conditional coverage probabilities at a fixed level $\alpha = 0.1$, obtained from the marginal coverage formulation (a single $\lambda$) or with a conditional coverage formulation with an $\alpha$-dependent $\lambda$ described in \mysec{conformal}. Left: learned function $g$, middle: learned function $g$ with post-clustering, right: optimal function $g$.\label{fig:coverage}
     }
\end{figure}

\EIT

\subsection{Regression}
We consider synthetic experiments with univariate regression to assess the performance of the new loss function, with $\X = [0,1]$.

\BIT
\item \textbf{Comparison of modular and submodular penalties.} We compare in \myfig{cover_1d} the impact of using different submodular functions (modular one and set-covers), both with optimal estimation and learned ones with spline kernels. We can see the set-cover's flattening effect, which favors connected sets once thresholded.

\begin{figure}[h]
   \centering
     \includegraphics[width=0.65\linewidth]{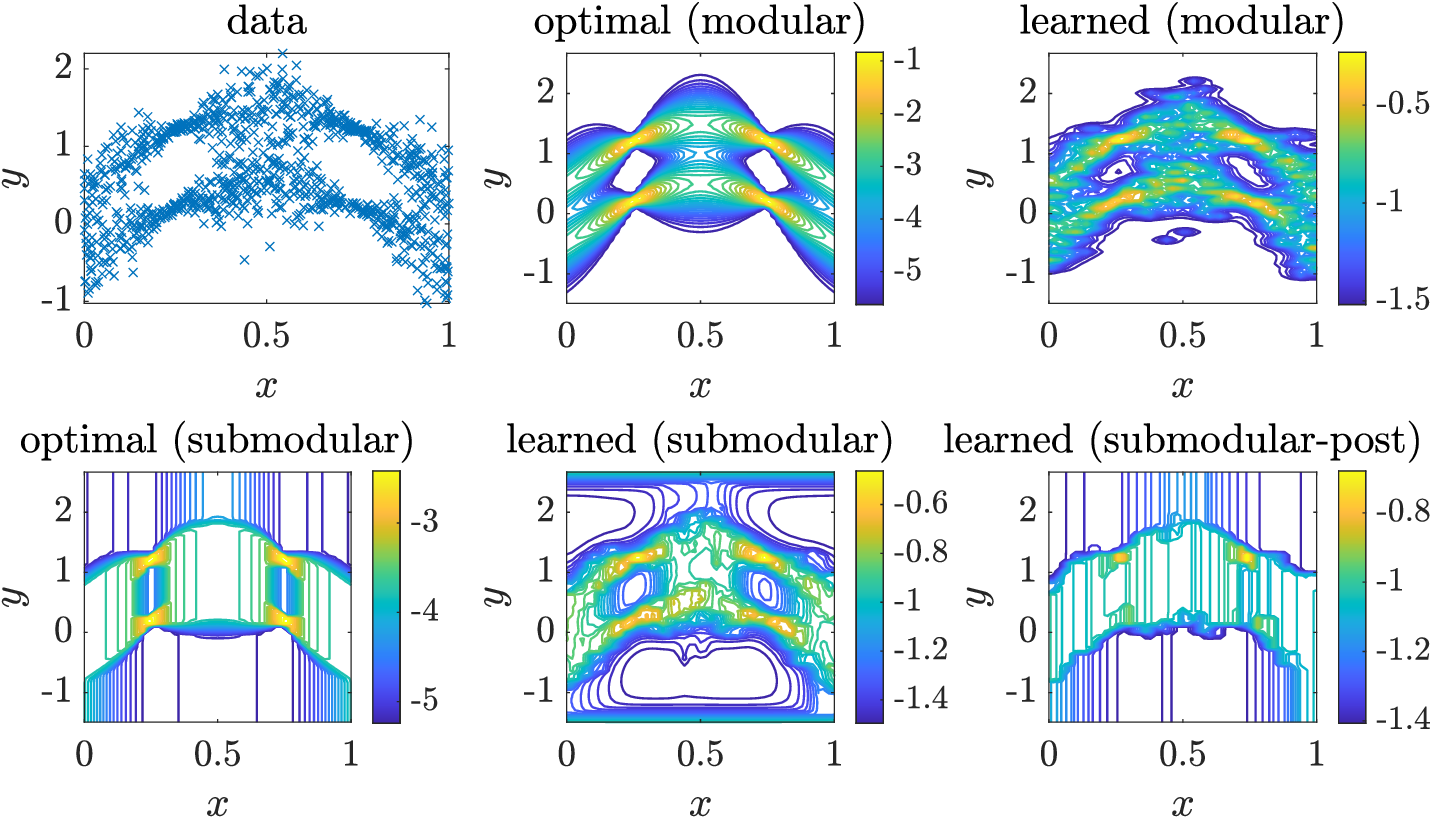}
     
     \vspace*{-.2cm}

     \caption{Comparing different estimated functions for a one-dimensional regression problem, for data from the top-left plot. Modular penalties are considered with the optimal prediction function in the top-middle plot, and the learned one in the top-right plot. In the bottom plots, we consider submodular penalties, with the optimal functions, and the learned functions, with and without post-clustering.\label{fig:cover_1d}
     }
\end{figure}

\item \textbf{Confidence sets.} We show in \myfig{cover_1d_coverage} for the simple modular $V$ the coverage sets for every $x \in \X $ at a given level $\alpha \in (0,1)$, for the strategy learned with our new loss function (both for a modular and a submodular penalty), as well a what can be learned by the interval loss precisely at this level~$\alpha$ (which is the sum of the pinball loss at levels $\alpha/2$ and $1-\alpha/2$), which can only output intervals. In contrast, our loss function leads to sets that are not intervals, with more regularity and fewer holes when considering the submodular penalty.
\EIT

\begin{figure}[h]
   \centering
     \includegraphics[width=0.75\linewidth]{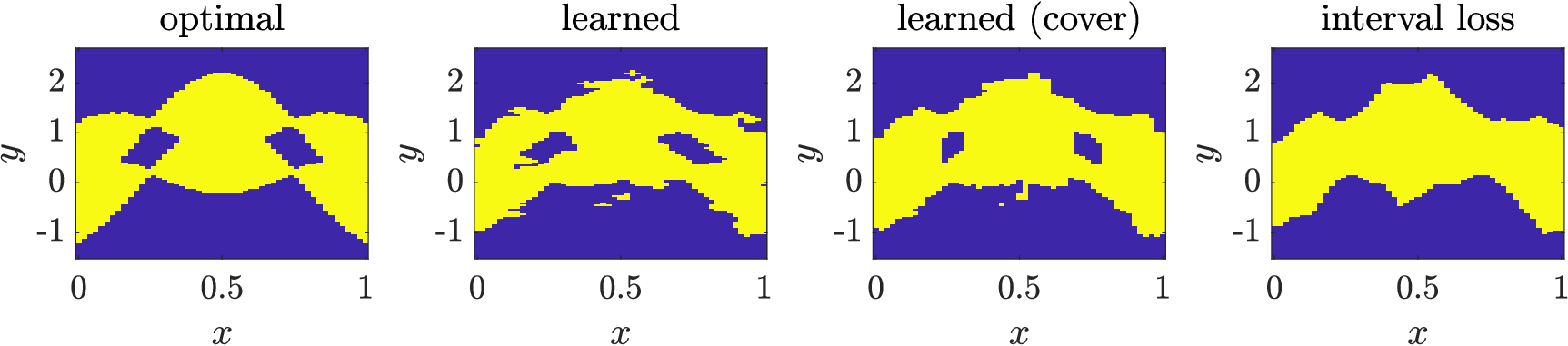}
     
     \vspace*{-.25cm}
     
     \caption{Comparing different estimated coverage at level $\alpha = 0.1$ for the one-dimensional regression problem from \myfig{cover_1d}. From left to right: optimal set, learned set with a modular penalty, learned set with cover penalty, intervall loss. \label{fig:cover_1d_coverage}
     }
\end{figure}

\section{Conclusion}
In this paper, we proposed a convex loss function for set prediction based on the Choquet integral. This allows the estimation of sets obtained by thresholding a real-valued function, leading to all potential conditional miscoverage levels, with an explicit trade-off between size and miscoverage. This work could be extended in several ways: (a) As described in \mysec{addex}, several relevant notions of size are not submodular, and the convexity of the loss is not satisfied anymore, and thus new formulations need to be defined.  (b) A detailed study of conditional coverage based on linear models could confirm the need for the extra regularizations that are proposed in \mysec{algorithms}, and highlight their provable benefits when used within the conformal prediction paradigm.
(c) Our loss functions have attractive computational properties for large-scale setups (e.g., quadratic, some form of separability, availability of stochastic gradients), which could be explored further.

\subsection*{Acknowledgements}
   The author thanks Eug\`ene Berta, Sacha Braun, David Holzm\"uller, and Michael Jordan, for insightful discussions related to this work.
   This work has received support from the French government, managed by the National Research Agency,
under the France 2030 program with the reference ``PR[AI]RIE-PSAI'' (ANR-23-IACL-0008).

\bibliography{submodularconformal}

\begin{thebibliography}{10}

\bibitem{casella2024statistical}
George Casella and Roger Berger.
\newblock {\em Statistical Inference}.
\newblock Chapman and Hall/CRC, 2024.

\bibitem{robert2007bayesian}
Christian~P. Robert.
\newblock {\em The {B}ayesian Choice: From Decision-theoretic Foundations to
  Computational Implementation}.
\newblock Springer, 2007.

\bibitem{vovk2005algorithmic}
Vladimir Vovk, Alexander Gammerman, and Glenn Shafer.
\newblock {\em Algorithmic Learning in a Random World}.
\newblock Springer, 2005.

\bibitem{shafer2008tutorial}
Glenn Shafer and Vladimir Vovk.
\newblock A tutorial on conformal prediction.
\newblock {\em Journal of Machine Learning Research}, 9(3):371--421, 2008.

\bibitem{angelopoulos2021gentle}
Anastasios~N. Angelopoulos and Stephen Bates.
\newblock Conformal prediction: A gentle introduction.
\newblock {\em Foundations and Trends in Machine Learning}, 16(4):494--591,
  2023.

\bibitem{cormen2022introduction}
Thomas~H. Cormen, Charles~E. Leiserson, Ronald~L. Rivest, and Clifford Stein.
\newblock {\em Introduction to Algorithms}.
\newblock MIT Press, 2022.

\bibitem{fujishige2005submodular}
Satoru Fujishige.
\newblock {\em Submodular Functions and Optimization}, volume~58.
\newblock Elsevier, 2005.

\bibitem{bach2013learning}
Francis Bach.
\newblock Learning with submodular functions: A convex optimization
  perspective.
\newblock {\em Foundations and Trends in Machine Learning}, 6(2-3):145--373,
  2013.

\bibitem{braun2025minimum}
Sacha Braun, Liviu Aolaritei, Michael~I. Jordan, and Francis Bach.
\newblock Minimum volume conformal sets for multivariate regression.
\newblock Technical Report 2503.19068, arXiv, 2025.

\bibitem{sadinle2019least}
Mauricio Sadinle, Jing Lei, and Larry Wasserman.
\newblock Least ambiguous set-valued classifiers with bounded error levels.
\newblock {\em Journal of the American Statistical Association},
  114(525):223--234, 2019.

\bibitem{osher1988fronts}
Stanley Osher and James~A. Sethian.
\newblock Fronts propagating with curvature-dependent speed: Algorithms based
  on {H}amilton-{J}acobi formulations.
\newblock {\em Journal of Computational Physics}, 79(1):12--49, 1988.

\bibitem{osher2003geometric}
Stanley Osher and Nikos Paragios.
\newblock {\em Geometric Level Set Methods in Imaging, Vision, and Graphics}.
\newblock Springer Science \& Business Media, 2003.

\bibitem{bookQuantile}
Roger Koenker.
\newblock {\em Quantile Regression}.
\newblock Cambridge University Press, 2005.

\bibitem{romano2019conformalized}
Yaniv Romano, Evan Patterson, and Emmanuel Cand\`es.
\newblock Conformalized quantile regression.
\newblock {\em Advances in Neural Information Processing Systems}, 2019.

\bibitem{winkler1972decision}
Robert~L. Winkler.
\newblock A decision-theoretic approach to interval estimation.
\newblock {\em Journal of the American Statistical Association},
  67(337):187--191, 1972.

\bibitem{gneiting2007strictly}
Tilmann Gneiting and Adrian~E. Raftery.
\newblock Strictly proper scoring rules, prediction, and estimation.
\newblock {\em Journal of the American Statistical Association},
  102(477):359--378, 2007.

\bibitem{carlier2016vector}
Guillaume Carlier, Victor Chernozhukov, and Alfred Galichon.
\newblock Vector quantile regression: An optimal transport approach.
\newblock {\em The Annals of Statistics}, 44(3):1165--1192, 2016.

\bibitem{pouplin2024relaxed}
Thomas Pouplin, Alan Jeffares, Nabeel Seedat, and Mihaela Van Der~Schaar.
\newblock Relaxed quantile regression: Prediction intervals for asymmetric
  noise.
\newblock In {\em International Conference on Machine Learning}, 2024.

\bibitem{matheson1976scoring}
James~E. Matheson and Robert~L. Winkler.
\newblock Scoring rules for continuous probability distributions.
\newblock {\em Management Science}, 22(10):1087--1096, 1976.

\bibitem{cabannnes2020structured}
Vivien Cabannnes, Alessandro Rudi, and Francis Bach.
\newblock Structured prediction with partial labelling through the infimum
  loss.
\newblock In {\em International Conference on Machine Learning}, 2020.

\bibitem{lei2014distribution}
Jing Lei and Larry Wasserman.
\newblock Distribution-free prediction bands for non-parametric regression.
\newblock {\em Journal of the Royal Statistical Society Series B: Statistical
  Methodology}, 76(1):71--96, 2014.

\bibitem{foygel2021limits}
Rina Foygel~Barber, Emmanuel~J. Candès, Aaditya Ramdas, and Ryan~J.
  Tibshirani.
\newblock The limits of distribution-free conditional predictive inference.
\newblock {\em Information and Inference: A Journal of the IMA},
  10(2):455--482, 2021.

\bibitem{devroye2013probabilistic}
Luc Devroye, L{\'a}szl{\'o} Gy{\"o}rfi, and G{\'a}bor Lugosi.
\newblock {\em A Probabilistic Theory of Pattern Recognition}, volume~31.
\newblock Springer Science \& Business Media, 2013.

\bibitem{bach2024learning}
Francis Bach.
\newblock {\em Learning Theory from First Principles}.
\newblock MIT Press, 2024.

\bibitem{topkis1978minimizing}
Donald~M. Topkis.
\newblock Minimizing a submodular function on a lattice.
\newblock {\em Operations Research}, 26(2):305--321, 1978.

\bibitem{lieb2001analysis}
Elliott~H. Lieb and Michael Loss.
\newblock {\em Analysis}, volume~14.
\newblock American Mathematical Society, 2001.

\bibitem{chambolle2010introduction}
Antonin Chambolle, Vicent Caselles, Daniel Cremers, Matteo Novaga, and Thomas
  Pock.
\newblock An introduction to total variation for image analysis.
\newblock In {\em Theoretical Foundations and Numerical Methods for Sparse
  Recovery}, volume~9, pages 263--340. De Gruyter, 2010.

\bibitem{denneberg1994non}
Dieter Denneberg.
\newblock {\em Non-Additive Measure and Integral}, volume~27.
\newblock Springer Science \& Business Media, 1994.

\bibitem{lovasz1983submodular}
L{\'a}szl{\'o} Lov{\'a}sz.
\newblock Submodular functions and convexity.
\newblock In {\em Mathematical Programming The State of the Art: Bonn 1982},
  pages 235--257. Springer, 1983.

\bibitem{bach2019submodular}
Francis Bach.
\newblock Submodular functions: From discrete to continuous domains.
\newblock {\em Mathematical Programming}, 175(1):419--459, 2019.

\bibitem{zhang2004statistical}
Tong Zhang.
\newblock Statistical behavior and consistency of classification methods based
  on convex risk minimization.
\newblock {\em The Annals of Statistics}, 32(1):56--85, 2004.

\bibitem{bartlett2006convexity}
Peter~L. Bartlett, Michael~I. Jordan, and Jon~D. McAuliffe.
\newblock Convexity, classification, and risk bounds.
\newblock {\em Journal of the American Statistical Association},
  101(473):138--156, 2006.

\bibitem{boyd2004convex}
Stephen Boyd and Lieven Vandenberghe.
\newblock {\em Convex Optimization}.
\newblock Cambridge University Press, 2004.

\bibitem{best1990active}
Michael~J. Best and Nilotpal Chakravarti.
\newblock Active set algorithms for isotonic regression: a unifying framework.
\newblock {\em Mathematical Programming}, 47(1):425--439, 1990.

\bibitem{liu2007fisher}
Yufeng Liu.
\newblock Fisher consistency of multicategory support vector machines.
\newblock In {\em International Conference on Artificial Intelligence and
  Statistics}, 2007.

\bibitem{bach2012structured}
Francis Bach, Rodolphe Jenatton, Julien Mairal, and Guillaume Obozinski.
\newblock Structured sparsity through convex optimization.
\newblock {\em Statistical Science}, 27(4):450--468, 2012.

\bibitem{najman2013mathematical}
Laurent Najman and Hugues Talbot.
\newblock {\em Mathematical Morphology: From Theory to Applications}.
\newblock John Wiley \& Sons, 2013.

\bibitem{obozinski2016unified}
Guillaume Obozinski and Francis Bach.
\newblock A unified perspective on convex structured sparsity: Hierarchical,
  symmetric, submodular norms and beyond.
\newblock Technical Report 01412385, HAL, 2016.

\bibitem{hand2023notes}
David~J. Hand and Christoforos Anagnostopoulos.
\newblock Notes on the {H}-measure of classifier performance.
\newblock {\em Advances in Data Analysis and Classification}, 17(1):109--124,
  2023.

\bibitem{tharwat2021classification}
Alaa Tharwat.
\newblock Classification assessment methods.
\newblock {\em Applied Computing and Informatics}, 17(1):168--192, 2021.

\bibitem{berta2024classifier}
Eug\`ene Berta, Francis Bach, and Michael~I. Jordan.
\newblock Classifier calibration with {ROC}-regularized isotonic regression.
\newblock In {\em International Conference on Artificial Intelligence and
  Statistics}, 2024.

\bibitem{orderrestricted}
Tim Robertson, Farroll~T. Wright, and Richard~L. Dykstra.
\newblock {\em Order Restricted Statistical Inference}.
\newblock John Wiley and Sons, 1988.

\bibitem{manning2008introduction}
Christopher~D. Manning, Prabhakar Raghavan, and Hinrich Sch{\"u}tze.
\newblock {\em Introduction to Information Retrieval}.
\newblock Cambridge University Press, 2008.

\bibitem{daubechies2010iteratively}
Ingrid Daubechies, Ronald DeVore, Massimo Fornasier, and C.~Sinan
  G{\"u}nt{\"u}rk.
\newblock Iteratively reweighted least squares minimization for sparse
  recovery.
\newblock {\em Communications on Pure and Applied Mathematics}, 63(1):1--38,
  2010.

\bibitem{bach2012optimization}
Francis Bach, Rodolphe Jenatton, Julien Mairal, and Guillaume Obozinski.
\newblock Optimization with sparsity-inducing penalties.
\newblock {\em Foundations and Trends in Machine Learning}, 4(1):1--106, 2012.

\bibitem{bach2011shaping}
Francis Bach.
\newblock Shaping level sets with submodular functions.
\newblock {\em Advances in Neural Information Processing Systems}, 2011.

\bibitem{fine2001efficient}
Shai Fine and Katya Scheinberg.
\newblock Efficient {SVM} training using low-rank kernel representations.
\newblock {\em Journal of Machine Learning Research}, 2(Dec):243--264, 2001.

\bibitem{bach2005predictive}
Francis Bach and Michael~I. Jordan.
\newblock Predictive low-rank decomposition for kernel methods.
\newblock In {\em International Conference on Machine Learning}, 2005.

\bibitem{wendland2004scattered}
Holger Wendland.
\newblock {\em Scattered Data Approximation}.
\newblock Cambridge University Press, 2004.

\bibitem{bach2023relationship}
Francis Bach.
\newblock On the relationship between multivariate splines and infinitely-wide
  neural networks.
\newblock Technical Report 2302.03459, arXiv, 2023.

\bibitem{golub2013matrix}
Gene~H. Golub and Charles~F. Van~Loan.
\newblock {\em Matrix Computations}.
\newblock Johns Hopkins University Press, 2013.

\bibitem{defazio2014saga}
Aaron Defazio, Francis Bach, and Simon Lacoste-Julien.
\newblock {SAGA}: A fast incremental gradient method with support for
  non-strongly convex composite objectives.
\newblock {\em Advances in Neural Information Processing Systems}, 2014.

\bibitem{hastie2009elements}
Trevor Hastie, Robert Tibshirani, and Jerome Friedman.
\newblock {\em The Elements of Statistical Learning}.
\newblock Springer, 2009.

\end{thebibliography}
  \end{document}